\definecolor{mydarkblue}{rgb}{0,0.08,0.45}
\newcommand{\ours}{\textsf{CCEM}}
\newcommand\op[1]{\operatorname{#1}}
\newcommand{\eg}{\textit{e.g., }}
\newcommand{\ie}{\textit{i.e., }}
\def\@fnsymbol#1{\ensuremath{\ifcase#1\or * \or w \or c \or m \or p \or \dagger\or \mathsection\or
   \ddager\or \mathparagraph\or \|\or **\or \dagger\dagger
   \or \ddagger\ddagger \else\@ctrerr\fi}}
\newtheorem{theorem}{Theorem}
\newtheorem{proposition}{Proposition}
\newtheorem{lemma}{Lemma}
\newtheorem{corollary}{Corollary}
\newtheorem{definition}{Definition}
\newtheorem{remark}{Remark}
\newtheorem{example}{Example}
\def\eqref#1{equation~\ref{#1}}
\def\ceil#1{\lceil #1 \rceil}
\def\1{\bm{1}}
\def\vc{{\bm{c}}}
\def\vu{{\bm{u}}}
\def\vv{{\bm{v}}}
\def\vw{{\bm{w}}}
\def\vx{{\bm{x}}}
\def\vy{{\bm{y}}}
\def\mI{{\bm{I}}}
\def\mM{{\bm{M}}}
\def\mU{{\bm{U}}}
\def\mV{{\bm{V}}}
\DeclareMathAlphabet{\mathsfit}{\encodingdefault}{\sfdefault}{m}{sl}
\SetMathAlphabet{\mathsfit}{bold}{\encodingdefault}{\sfdefault}{bx}{n}
\def\sR{{\mathbb{R}}}
\newcommand{\Ls}{\mathcal{L}}
\DeclareMathOperator*{\argmin}{arg\,min}
\begin{document}

%

%

\twocolumn[

\aistatstitle{Analysis of Using Sigmoid Loss for Contrastive Learning}

\aistatsauthor{ Chungpa Lee \And Joonhwan Chang \And Jy-yong Sohn }

\aistatsaddress{ Yonsei University \And  Yonsei University \And  Yonsei University } ]

\begin{abstract}
Contrastive learning has emerged as a prominent branch of self-supervised learning for several years. Especially, CLIP, which applies contrastive learning to large sets of captioned images, has garnered significant attention.
Recently, SigLIP, a variant of CLIP, has been proposed, which uses the sigmoid loss instead of the standard InfoNCE loss. SigLIP achieves the performance comparable to CLIP in a more efficient manner by eliminating the need for a global view. However, theoretical understanding of using the sigmoid loss in contrastive learning is underexplored. 
In this paper, we provide a theoretical analysis of using the sigmoid loss in contrastive learning, in the perspective of the geometric structure of learned embeddings. 
First, we propose the \textit{double-Constant Embedding Model} (\ours{}), a framework for parameterizing various well-known embedding structures by a single variable. Interestingly, the proposed \ours{} 
is proven to contain the optimal embedding with respect to the sigmoid loss. Second, we mathematically analyze the optimal embedding minimizing the sigmoid loss for contrastive learning. The optimal embedding ranges from simplex equiangular-tight-frame to antipodal structure, depending on the temperature parameter used in the sigmoid loss.
Third, our experimental results on synthetic datasets coincide with the theoretical results on the optimal embedding structures.
\end{abstract}

\section{Introduction}\label{sec:intro}
Contrastive learning (CL), which learns the embedding space based on the comparison between training data, is recently considered as one of the popular method for learning good representations \citep{liu2021self, jaiswal2020survey, chen2020simple, tian2020makes}. The basic idea is, to train an encoder such that similar training data (called positive pairs) are mapped to closer embedding vectors, while the embedding vectors for dissimilar training data (called negative pairs) are far apart. For example, in~\cite{chen2020simple}, different augmentations of the same image are labeled as a positive pair, while augmentations of different images are labeled as a negative pair. In~\cite{radford2021clip}, the caption text and the image extracted from a single captioned image are considered as a positive pair, while those extracted from different captioned images are considered as a negative pair. 
In general, different \textit{views}  (augmentations or modalities) on a single entity form a positive pair, while views on different entities form a negative pair.

Interestingly, self-supervised learning trained by such comparison-based criterion alone, without any labeled data, is sufficient to learn meaningful features 
~\citep{he2020momentum,chen2020simple,radford2021clip,jia2021scaling,girdhar2023imagebind}, recently outperforming the features learned by supervised learning.
Various ways of defining contrastive loss have been considered in the past years~\citep{schroff2015facenet,oh2016deep,sohn2016improved,gutmann2010noise, gutmann2012noise, hyvarinen2019nonlinear}, see Appendix~\ref{sec:compare:loss} for more details. Especially, one of the pioneering works on multi-modal CL called CLIP~\citep{radford2021clip} uses 
the InfoNCE loss~\citep{oord2018representation}, which is theoretically analyzed in recent papers~\citep{saunshi2019theoretical, saunshi2022understanding, wang2020understanding, wang2021understanding, chen2021intriguing, arora2019theoretical}. 
Above all,~\citet{lu2022neural} shows that the optimal solution of the variational problem of minimizing cross-entropy loss, which coincides with the InfoNCE loss, is structured by the simplex equiangular-tight-frame (ETF) if feature vectors are in a sufficiently high-dimensional space.

Recently, a variant of CLIP called SigLIP is proposed~\citep{zhai2023sigmoid}, which uses the sigmoid loss instead of the InfoNCE loss, motivated by the fact that computing with the sigmoid loss is much more efficient than using the InfoNCE loss.
More precisely, CL using the InfoNCE loss reveals remarkable performance only if the negative sample size, which is limited by its mini-batch size, is sufficiently large \citep{chen2020simple, zhang2022dual}. However, the necessity for a larger batch size imposes the computation burdens on CL using the InfoNCE loss. On the other hand, learning CL with SigLIP is independent of batch size, enabling comparable performance with significantly reduced computational demands.
Unfortunately, there is no theoretical discussions on the behavior of SigLIP.

\paragraph{Our Contributions}

In this paper, we have taken the first step towards understanding SigLIP, or in general, using the sigmoid loss for contrastive learning.
Specifically, we have the following contributions, in terms of analyzing the embeddings learned by the sigmoid loss.
\begin{itemize}
    \item We define the \emph{double-Constant Embedding Model} (\ours{}), a framework that models various structures formed by multiple embedding vectors. 
    This model contains well-known structures including the simplex ETF structure and the antipodal structure. 
    \item We show that optimal embedding which minimizes the sigmoid loss can be found in the proposed \ours{}. In other words, it is sufficient to find the optimal solution within the structures that can be represented in the \ours{} framework.
    \item The optimal embedding structure that minimizes sigmoid loss is specified. The optimal embedding forms simplex ETF when the temperature parameter is sufficiently large, while it forms an antipodal structure when the temperature is smaller than a threshold.
    \item Our experimental results on synthetic datasets support our theoretical findings on the behavior of optimal embeddings.
\end{itemize}

\section{Related Works}

\vspace{-3mm}
\paragraph{Contrastive Learning}

Self-supervised learning has gained significant attention as it enables learning from large-scale data without the need for laborious supervision labeling \citep{goyal2019scaling, chen2020simple, he2020momentum, chen2020big}. Among various techniques, CL has been widely utilized in computer vision \citep{caron2020unsupervised, chen2021exploring, grill2020bootstrap} and natural language processing \citep{giorgi2020declutr, wu2020clear} domains due to the excellent performance of learned embeddings.
CL is a similarity-based training method, aiming to position similar data close to each other in the embedding space while pushing dissimilar data far apart. Moreover, it is known that CL can be applied in multi-modal scenarios~\citep{radford2021clip, jia2021scaling, ma2020active, elizalde2023clap,li2021supervision,mu2022slip,yao2021filip,goel2022cyclip,zhai2023sigmoid}, which leverage data with multiple views from different modalities such as images paired with text captions. 
In this paper, we provide analysis on the CL, in the perspective of the learned embedding structure, 
which can be applied for both uni-modal and multi-modal scenarios.

\paragraph{Analysis on Geometry of Embeddings}

Several researchers focused on the geometric structure of optimal embeddings learned by CL such as antipodal and simplex ETF 
\citep{lu2022neural, awasthi2022more, sreenivasan2023mini, graf2021dissecting}.
Especially, \cite{lu2022neural} shows that the optimal feature vector 
solution of a variational problem related to cross-entropy loss forms a simplex ETF, based on previous results on the ETFs~\citep{sustik2007existence, fickus2015tables, strohmer2003grassmannian}.
On this basis,~\citet{sreenivasan2023mini} demonstrated that optimal representations for minimizing the contrastive loss form a simplex ETF  when $d\geq N-1$. 
Our paper is similar to this work in terms of finding optimal embeddings for minimizing contrastive losses.
While existing works solve the optimization problem for softmax-based losses, our paper solves the problem for the sigmoid loss.
In addition, we define a novel embedding structure model which includes the simplex ETF and antipodal structure in a single framework, which helps find the embeddings minimizing the sigmoid loss.

\paragraph{Analysis on Temperature Parameter of CL}
Various theoretical studies have explored on the temperature parameter of CL \citep{zhang2021temperature, zhang2022dual, zhang2022does}.
Especially, \cite{wang2021understanding} shows that the temperature parameter of CL controls the strength of penalties on the hard negative sample by using gradient analysis.
\cite{kukleva2023temperature} introduces a schedule for the temperature parameter of CL to improve the representation quality.
However, previous research focused on the role of the temperature parameter during the learning process without considering the optimal embeddings from different temperature parameters. Particularly, we demonstrated that the use of the sigmoid loss results in embeddings that are highly dependent on the temperature parameter.



\section{Problem Setup}\label{sec:problem setup} 

Here we first provide backgrounds on CL problem we are focusing on, and introduce several notations necessary for defining our problem setup.  Then, we provide the formal description of our target problem of optimizing the embedding with the sigmoid loss, and describe the skeleton of our paper. 

\subsection{Backgrounds}
CL uses $N$ paired instances $\{(\vx_i, \vy_i)\}_{i=1}^N$, where $\vx_i$ and $\vy_i$ for the same index $i$ are two different views or modalities of the same object. The goal of CL is to train encoders $f_x$ and $f_y$ in a way that the output of the encoders for different views ($\vx_i$ and $\vy_i$) of the same object are mapped 
into similar embedding vectors, \ie $f_x(\vx_i) \approx f_y(\vy_i)$.


Denote the collections of embedding vectors by $\mU=[\vu_1, \vu_2, \cdots ,\vu_N]$ and $\mV=[\vv_1, \vv_2, \cdots ,\vv_N]$ where $\vu_i = f_x(\vx_i) \in \mathbb{R}^d$ and $\vv_i = f_y(\vy_i) \in \mathbb{R}^d$ for $i\in [N]$. 
For mathematical simplicity, we use the normalized embedding space where $\vu_i^\top\vu_i=\vv_i^\top\vv_i=1$ for all $i \in [N]$, which is commonly used in related works~\citep{awasthi2022more,sreenivasan2023mini,huang2023towards}.

Here we focus on two popular contrastive losses: the InfoNCE loss 
\begin{align}
   \Ls^{\op{InfoNCE}}(\mU, \mV) 
&:= -\frac{1}{N} \sum_{i=1}^N \log \frac{\exp (\vu_i^\top \vv_i /t)}{\sum_{j=1}^N \exp (\vu_i^\top \vv_j /t)}  \\
&\quad - \frac{1}{N} \sum_{i=1}^N \log \frac{\exp(\vu_i^\top \vv_i /t)}{\sum_{j=1}^N \exp (\vu_j^\top \vv_i /t)}\label{loss:info}
\end{align} 
considered in~\cite{oord2018representation,radford2021clip}, 
and  
the sigmoid loss 
\begin{align}\label{loss:siglip}
\Ls^{\op{sig}}(\mU, \mV) 
&:= -\frac{1}{N} \sum_{i=1}^N  \log \frac{1}{1+\exp (-t\vu_i^\top \vv_i+b)} \\
&\quad - \frac{1}{N} \sum_{i=1}^N \sum_{j\in [N] \setminus \{i\}}  \log \frac{1}{1+\exp (t\vu_i^\top \vv_j-b)}
\end{align}

proposed by \cite{zhai2023sigmoid} as an alternative to the InfoNCE loss  $\Ls^{\op{InfoNCE}}$ in Eq.~\ref{loss:info} for the purpose of efficient computation.
Here, $t>0$ and $b \geq 0$ are temperature and bias, respectively.



Under this setting, one important question is about finding the optimal embedding vectors
\begin{align}
    (\mU^{\star}, \mV^{\star}) := \argmin_{\mU,\mV} \Ls(\mU, \mV)
\end{align}
learned by a pre-defined contrastive loss $\Ls$. A recent work~\citep{lu2022neural} shows that the optimal embedding vectors for the InfoNCE loss 
form a simplex ETF defined below:
\begin{definition} Suppose $d\geq N-1$. A set of $N$ vectors $\{\vw_i\}_{i=1}^N$ in $\sR^d$ is called 
$(N-1)$-simplex ETF, if 
\begin{align}
 \vw_i^\top\vw_i&=1, \quad \forall i \in [N], \\    
\vw_i^\top\vw_j&=-\frac{1}{N-1}, \quad \forall i, j \in [N] \text{ with } i \ne j.
\end{align}
\end{definition}



\begin{corollary}[Theorem 1 of~\cite{lu2022neural}]\label{thm:lu}
Let 
$    (\mU^{\star}, \mV^{\star}) = \argmin_{\mU, \mV} \Ls^{\op{InfoNCE}}(\mU, \mV).$
Suppose $d \ge N-1$, where $d$ is the dimension of embedding vectors $\vu_i^{\star}, \vv_i^{\star}$.
Then,  
$\{\vu_i^{\star} \}_{i=1}^N$ forms a $(N-1)$-simplex ETF and 
$\vu_i^{\star} = \vv_i^{\star}$ holds for all $i \in [N]$.
\end{corollary}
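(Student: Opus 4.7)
The plan is to establish a chain of Jensen-type lower bounds that collectively force the optimum to be a simplex ETF with $\vu_i^{\star}=\vv_i^{\star}$. First I would rewrite the loss in the form
\begin{align*}
\Ls^{\op{InfoNCE}}(\mU,\mV) &= \frac{1}{N}\sum_i \log\Bigl[1 + \sum_{j\neq i}\exp(a_{ij})\Bigr] \\
&\quad + \frac{1}{N}\sum_i \log\Bigl[1 + \sum_{j\neq i}\exp(b_{ij})\Bigr],
\end{align*}
with $a_{ij}=(\vu_i^\top\vv_j - \vu_i^\top\vv_i)/t$ and $b_{ij}=(\vu_j^\top\vv_i - \vu_i^\top\vv_i)/t$, then apply Jensen's inequality to the convex $\exp$ on the inner sum to replace it by $(N-1)\exp(\bar a_i)$, where $\bar a_i := \frac{1}{N-1}\sum_{j\neq i}a_{ij}$, and similarly for $b$. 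Equality in this step requires $\vu_i^\top\vv_j$ to be constant in $j\neq i$ for each fixed $i$ (and symmetrically for $\vu_j^\top\vv_i$).

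Next, since $x\mapsto \log[1+(N-1)e^x]$ is convex and increasing, averaging over $i$ via a second application of Jensen gives
\begin{align*}
\Ls^{\op{InfoNCE}}(\mU,\mV) \geq 2\log\bigl[1 + (N-1)\exp(\bar c)\bigr],
\end{align*}
where
\begin{align*}
\bar c = \frac{1}{N(N-1)t}\Bigl[\Bigl(\sum_i\vu_i\Bigr)^{\!\top}\Bigl(\sum_j\vv_j\Bigr) - N\sum_i\vu_i^\top\vv_i\Bigr].
\end{align*}
Cauchy-Schwarz together with the unit-norm constraint gives $\sum_i\vu_i^\top\vv_i \leq N$ with equality iff $\vu_i=\vv_i$ for all $i$, in which case the cross term collapses to $\|\sum_i\vu_i\|^2 \geq 0$. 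This yields $\bar c \geq -\tfrac{N}{(N-1)t}$ and hence the clean lower bound $\Ls^{\op{InfoNCE}} \geq 2\log[1+(N-1)\exp(-\tfrac{N}{(N-1)t})]$.

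Finally, I would verify that this bound is attained precisely when $\vu_i^{\star}=\vv_i^{\star}$ and $\{\vu_i^{\star}\}_{i=1}^N$ form a simplex ETF: such a configuration satisfies $\vu_i^\top\vu_j = -\tfrac{1}{N-1}$ for $i\neq j$ and $\sum_i\vu_i = \vzero$, which simultaneously saturates the two Jensen steps and Cauchy-Schwarz, and the hypothesis $d\geq N-1$ guarantees its existence in $\bR^d$. The main obstacle is tracking the three equality conditions in parallel, since each pins down a different geometric feature (constancy of the off-diagonal products, equality of the row averages $\bar a_i$, and the symmetry $\vu_i=\vv_i$), and one must argue that the symmetric simplex ETF is the unique configuration realizing all of them at once. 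Since the statement is proved as Theorem 1 of~\cite{lu2022neural} for the essentially equivalent cross-entropy formulation, this corollary can alternatively be obtained by direct citation of that result.
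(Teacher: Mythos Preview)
Your two Jensen reductions are correct and match what the paper does in its proof of Corollary~\ref{cor:DEM infoNCE}. The gap is in the third step, where you pass from $\bar c$ to the numerical lower bound. You argue that Cauchy--Schwarz gives $\sum_i\vu_i^\top\vv_i\le N$ with equality iff $\vu_i=\vv_i$, and that \emph{in this equality case} the cross term becomes $\bigl\|\sum_i\vu_i\bigr\|^2\ge 0$, concluding $\bar c\ge -\tfrac{N}{(N-1)t}$. But this only shows that $-\tfrac{N}{(N-1)t}$ is \emph{attained} at the simplex ETF; it does not show it is a \emph{lower bound} over all $(\mU,\mV)$. Writing $P=(\sum_i\vu_i)^\top(\sum_j\vv_j)$ and $S=\sum_i\vu_i^\top\vv_i$, you are implicitly minimizing $P-NS$ by first forcing $S=N$ and then minimizing $P$, which does not rule out that a smaller $S$ could permit a much more negative $P$. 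Indeed, Cauchy--Schwarz applied separately only gives $P\ge -N^2$ and $-NS\ge -N^2$, hence $P-NS\ge -2N^2$, which is too weak by a factor of two. What is missing is a \emph{coupling} inequality between $P$ and $S$; this is precisely the content of the paper's Lemma~\ref{thm:model}, whose proof (via $\sum_i\|\vu_i-\vv_i\|^2\ge N\bigl\|\tfrac{1}{N}\sum_i(\vu_i-\vv_i)\bigr\|^2$) yields $P\ge\tfrac{N}{2}S-\tfrac{N^2}{2}$ and hence $P-NS\ge -\tfrac{N}{2}(S+N)\ge -N^2$, with equality forcing $\vu_i=\vv_i$ and $\sum_i\vu_i=0$.

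The paper's own proof of this corollary takes a structurally different route: rather than chasing the chain of equality conditions directly, it first invokes Corollary~\ref{cor:DEM infoNCE} to reduce the search to the one-parameter \ours{} family $(\mU(\delta),\mV(\delta))$, then substitutes to obtain
\[
\Ls^{\op{InfoNCE}}(\mU(\delta),\mV(\delta))
=2\log\Bigl(1+(N-1)\exp\Bigl(-\tfrac{N}{(N-1)(1+\delta^2)t}\Bigr)\Bigr),
\]
which is manifestly strictly increasing in $\delta$, so the minimum is at $\delta=0$ (the simplex ETF with $\vu_i=\vv_i$). This packaging hides the equality-condition bookkeeping inside the construction of \ours{}, which itself rests on Lemma~\ref{thm:model}. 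Your direct approach is viable and closer in spirit to Lu et al., but to complete it you must either invoke Lemma~\ref{thm:model} (or reprove its key inequality) at the point where you currently appeal to Cauchy--Schwarz.
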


    Note that Corollary~\ref{thm:lu} slightly differs from Theorem 1 of \cite{lu2022neural}, which demonstrates that the optimal embedding vectors for the InfoNCE loss $ \Ls^{\op{InfoNCE}}$ \textit{without any temperature parameter $t$} form a simplex ETF. However, even with the inclusion of a temperature parmeter $t$ in the InfoNCE loss $ \Ls^{\op{InfoNCE}}$, the result remains unchanged. The proof is on Appendix~\ref{appendix:optimal-infonce}.

    We define the antipodal structure as follows. 
    \begin{definition}
    \label{def:antipodal}
    A set of $2N$ vectors $\{\vu_i\}_{i=1}^N$ and $\{\vv_i\}_{i=1}^N$ in $\sR^d$ is called antipodal, if 
    \begin{align}
     \vu_i = -\vv_j, \quad \forall i,j \in [N].
    \end{align}
    \end{definition}
    Note that if vectors $\{\vu_i, \vv_i\}_{i=1}^N$ form the antipodal structure, then the positive pairs are heading to the opposite direction, while the negative pairs are aligned, which is achieving the opposite of what contrastive learning is aiming at.

\subsection{Skeleton of This Paper}

In this paper, we analyze the embedding $\mU^{\star}, \mV^{\star}$ optimized by the sigmoid loss $\Ls^{\op{sig}}$ in Eq.~\ref{loss:siglip}, denoted by 
\begin{align}
    (\mU^{\star}, \mV^{\star}) = \argmin_{\mU, \mV} \Ls^{\op{sig}}(\mU, \mV).
\end{align}

Before solving this problem, in Sec.~\ref{sec:embedding}, we first propose the \textit{double-Constant Embedding Model} (\ours{}), which models the evolution of different embedding vectors using a single parameter $\delta \ge 0$. Note that this model contains various embedding structures including simplex ETF. Interestingly, under some mild assumption (which holds in practice) on the contrastive loss, our proposed model is shown to contain the optimal embedding $(\mU^{\star}, \mV^{\star})$.

Based on this observation, in Sec.~\ref{sec:opt_embed}, we use the proposed \ours{} to explore the optimal embedding for sigmoid loss. Our mathematical results specify the optimal embedding for various hyperparameter settings. Furthermore, our theoretical findings are supported by experimental results on toy datasets. 

\begin{remark}\label{rem:d>=N}
Note that our analysis focuses on the case of $d\geq N$, similar to related works~\citep{lu2022neural,sreenivasan2023mini}.
This is because when $d<N-1$, the ETF solution, which is connected with the optimal solution for our target CL, does not have a standard format for all $N,d$ values~\citep{sustik2007existence,fickus2012steiner, fickus2015tables,azarija2018there}.  
\end{remark}

\section{Double-Constant Embedding Model}\label{sec:embedding}

In this section, we propose a framework that models various types of geometric structures formed by embedding vectors $\{(\vu_i, \vv_i)\}_{i=1}^N$. Since it is not trivial to model arbitrary structures of $2N$ embedding vectors, here we restrict ourselves to model a set of vectors satisfying  
\begin{align}\label{eqn:symmetric_embedding}
 \vu_i^\top \vv_i &= c_1 \quad \text{ for all } i \in [N], \\
 \vu_i^\top \vv_j &= c_2 \quad \text{ for all } i, j \in [N] \text{ with } i \ne j
\end{align}
for some $c_1, c_2 \in [-1, 1]$, \ie 
the matrix $M$ containing elements $M_{ij} = \vu_i^\top \vv_j$ is \textit{double-constant} \citep{o2021double}, where all diagonal elements are $c_1$ and all off-diagonal elements are $c_2$.
Interestingly, as shown later in Corollary~\ref{thm:opt_is_symmetric}, the optimal embedding $(\mU^{\star}, \mV^{\star})$ minimizing the sigmoid loss satisfies the double-constant property in Eq.~\ref{eqn:symmetric_embedding}, which shows that we can safely search the embedding vectors represented by our framework, without losing the optimality.
With this in mind, we now introduce the \textit{double-Constant Embedding Model} (\ours{}).







\begin{definition}[Double-Constant Embedding Model]
\label{def:model}
Suppose $d\geq N$.
Let $\{\vw_i\}_{i=1}^N$ be the $(N-1)$-simplex ETF in $\mathbb{R}^{d-1}$. 
The double-Constant Embedding Model (\ours{}) constructs $2N$ vectors  $(\{\vu_i^{\delta}, \vv_i^{\delta}\}_{i=1}^N)$ as 
\begin{align}
    \vu_i^{\delta} :=
    \begin{pmatrix} 
        \frac{1}{\sqrt{1+\delta^2}} \vw_i  \\
        \frac{1}{\sqrt{1+\delta^2}}\delta 
    \end{pmatrix} 
    \in \mathbb{R}^{d}
    , \quad 
    \vv_i^{\delta} :=
    \begin{pmatrix} 
        \frac{1}{\sqrt{1+\delta^2}} \vw_i  \\
        -\frac{1}{\sqrt{1+\delta^2}}\delta 
    \end{pmatrix} 
    \in \mathbb{R}^{d},
\end{align}
which are parameterized by $\delta \ge 0$. The corresponding embedding matrix is denoted by 
\begin{align}
    \mU (\delta) = [\vu_1^{\delta}, \cdots, \vu_N^{\delta}], \quad \mV (\delta) = [\vv_1^{\delta}, \cdots, \vv_N^{\delta}].
\end{align}
\end{definition}

\subsection{Properties and Examples of \ours{}}

Our proposed model has properties summarized below:
\begin{proposition}\label{prop:model}
Let $\{\vu_i^{\delta}, \vv_i^{\delta}\}_{i=1}^N$ be $2N$ embedding vectors formed by the \ours{} in Def.~\ref{def:model}, for a given $\delta \ge 0$. The inner products between embedding vectors are
\begin{align}\label{eqn:double-const-DEM}
&(\vu_i^{\delta})^\top \vv_i^{\delta}  = \frac{1-\delta^2}{1+\delta^2} \quad \forall i \in [N],
\\
&(\vu_i^{\delta})^\top \vv_j^{\delta}  = -\frac{\frac{1}{N-1}+\delta^2}{1+\delta^2}
\quad \forall i, j \in [N] \text{ with } i \neq j,
\end{align}
i.e., the double-constant property in Eq.~\ref{eqn:symmetric_embedding} is satisfied.
When $\delta=0$, the embedding vectors satisfy
\begin{align}
 \{&\vu_i^{0}\}_{i=1}^N \text{ form } (N-1)\text{-simplex ETF,} \\ 
 &\vu_i^{0} = \vv_i^{0} \quad \forall i \in [N]. 
\end{align}
When $\delta = \infty$, the embedding vectors satisfy
\begin{align}
    &\vu_i^{\infty} = -\vv_j^{\infty} \quad \forall i,j \in [N]
\end{align}
which is ‘antipodal’ embedding defined in Def.~\ref{def:antipodal}.
\end{proposition}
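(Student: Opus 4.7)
My plan is to verify the proposition by direct computation, using the explicit form of $\vu_i^{\delta}, \vv_i^{\delta}$ in Def.~\ref{def:model} and the defining inner-product properties of the $(N-1)$-simplex ETF $\{\vw_i\}_{i=1}^N$ in $\mathbb{R}^{d-1}$. Concretely, for any $i, j \in [N]$, I first write
\begin{align}
(\vu_i^{\delta})^\top \vv_j^{\delta}
= \frac{1}{1+\delta^2}\bigl(\vw_i^\top \vw_j - \delta^2\bigr),
\end{align}
since the last coordinates of $\vu_i^{\delta}$ and $\vv_j^{\delta}$ are $\delta/\sqrt{1+\delta^2}$ and $-\delta/\sqrt{1+\delta^2}$, respectively. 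Then I plug in $\vw_i^\top \vw_i = 1$ for $i=j$ and $\vw_i^\top \vw_j = -\tfrac{1}{N-1}$ for $i\neq j$, which gives the two formulas in Eq.~\ref{eqn:double-const-DEM} after routine simplification. This immediately confirms the double-constant property of Eq.~\ref{eqn:symmetric_embedding} with $c_1 = (1-\delta^2)/(1+\delta^2)$ and $c_2 = -(\tfrac{1}{N-1}+\delta^2)/(1+\delta^2)$.

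For the $\delta = 0$ endpoint, the normalizing constant becomes $1$ and the last coordinate vanishes, so $\vu_i^{0} = (\vw_i^\top, 0)^\top = \vv_i^{0}$, giving $\vu_i^{0} = \vv_i^{0}$ for every $i$. Since the map $\vw \mapsto (\vw^\top, 0)^\top$ preserves inner products and norms, the images $\{\vu_i^{0}\}_{i=1}^N$ inherit the simplex-ETF inner-product relations from $\{\vw_i\}_{i=1}^N$, so they form an $(N-1)$-simplex ETF in $\mathbb{R}^d$. I will cite this directly from the definition of a simplex ETF rather than re-deriving it.

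For the $\delta = \infty$ endpoint, I interpret $\vu_i^{\infty}$ and $\vv_i^{\infty}$ as the limits $\lim_{\delta \to \infty} \vu_i^{\delta}$ and $\lim_{\delta \to \infty} \vv_i^{\delta}$. Since $\tfrac{1}{\sqrt{1+\delta^2}} \to 0$ and $\tfrac{\delta}{\sqrt{1+\delta^2}} \to 1$ as $\delta \to \infty$, the first $d-1$ coordinates of $\vu_i^{\delta}$ and $\vv_i^{\delta}$ vanish and the last coordinates tend to $+1$ and $-1$, respectively. Hence $\vu_i^{\infty} = \ve_d$ and $\vv_j^{\infty} = -\ve_d$ for all $i,j$, where $\ve_d$ denotes the last standard basis vector, which yields $\vu_i^{\infty} = -\vv_j^{\infty}$ and matches Def.~\ref{def:antipodal}.

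Overall there is no real obstacle: the statement reduces to elementary arithmetic once the ETF inner products are substituted, and the only point requiring a sentence of care is the interpretation of the $\delta = \infty$ case as a coordinate-wise limit so that the $1/\sqrt{1+\delta^2}$ normalization is handled cleanly.
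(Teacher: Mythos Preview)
Your proposal is correct and is exactly the direct computation the paper implicitly relies on; the paper states Proposition~\ref{prop:model} without a separate proof, treating the inner-product formulas and the $\delta=0,\infty$ cases as immediate from Def.~\ref{def:model} and the simplex-ETF inner products. Your write-up fills in precisely those routine steps.
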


Below we provide an example of embeddings formed by our model, when $d=3$ and $N=3$.
\begin{example}
Fig.~\ref{fig:model} illustrates an example of the \ours{} proposed in Def.~\ref{def:model}, where three embedding vector pairs ($N=3$) are in three dimensional space ($d=3$).
When $\delta=0$, the embedding structure model follows precisely the simplex ETF, which coincides with the result of Prop.~\ref{prop:model}. As $\delta$ increases, the angle between positive pairs $(\vu_i, \vv_i)$ increases for all $i \in [N]$, and the distance between negative pairs $(\vu_i, \vv_j)$ also increases for all $i \ne j$. This means that increasing $\delta$ is pushing both positive pairs and negative pairs apart.
In the extreme case of $\delta=\infty$, we have $\vu_i = - \vv_j$ for all $i,j$, meaning that pairs with positive labels ($\vu_i$ and $\vv_i$) are ignored and pairs with negative labels ($\vu_i$ and $\vv_j$ $\forall i \ne j$) dominate the behavior. Thus, we call this case as the `antipodal' structure.

\end{example}

\begin{figure}[t!]
    \centering
    \begin{tabular}{ccc}
    \includegraphics[height=2cm]{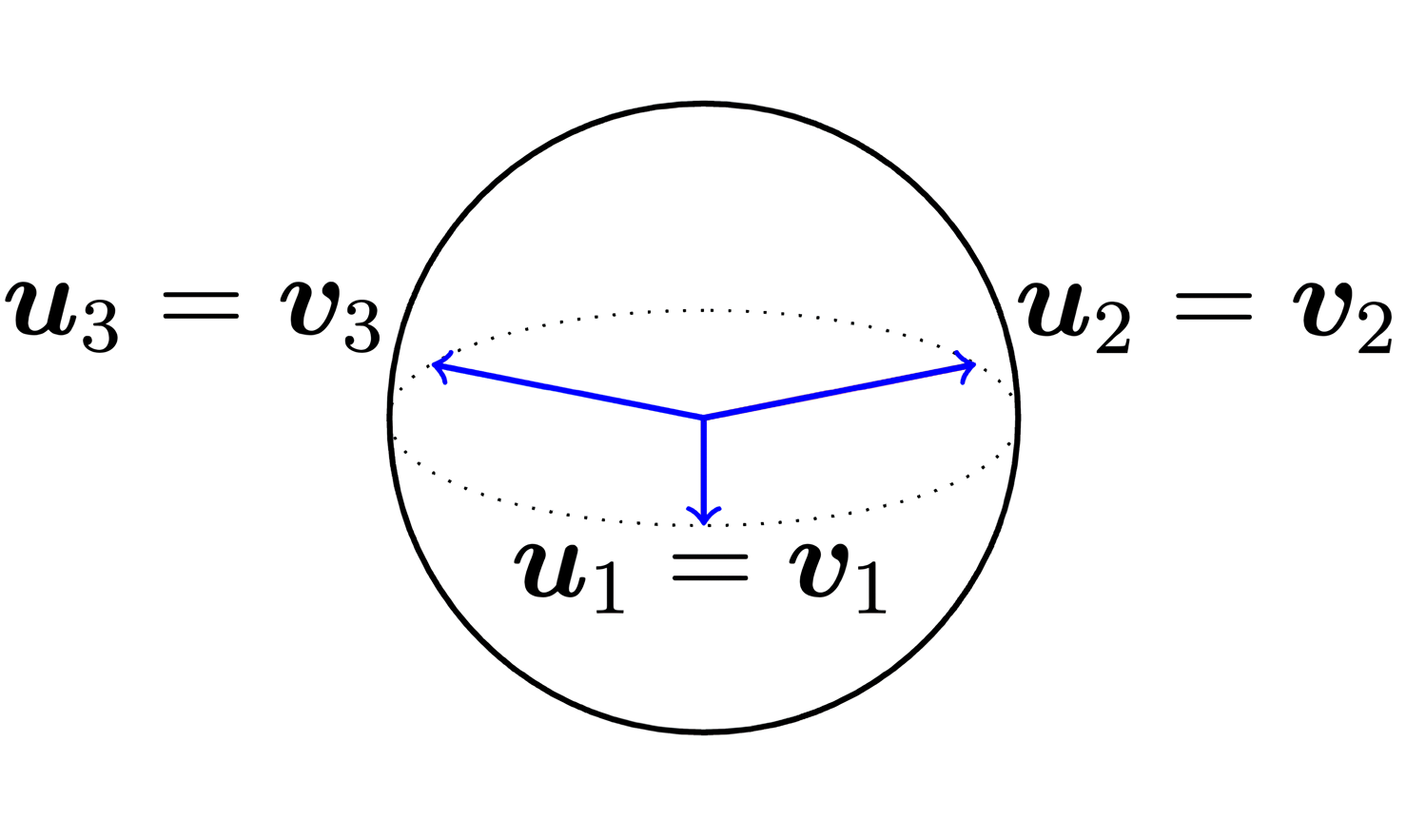} &
    \includegraphics[height=2cm]{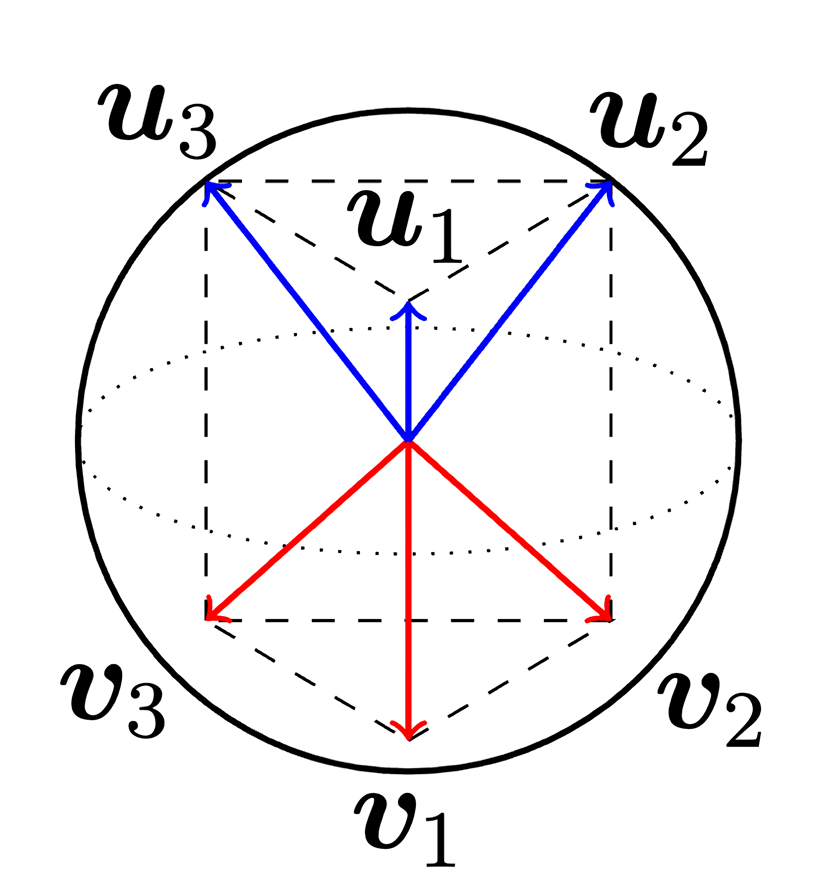} & 
    \includegraphics[height=2cm]{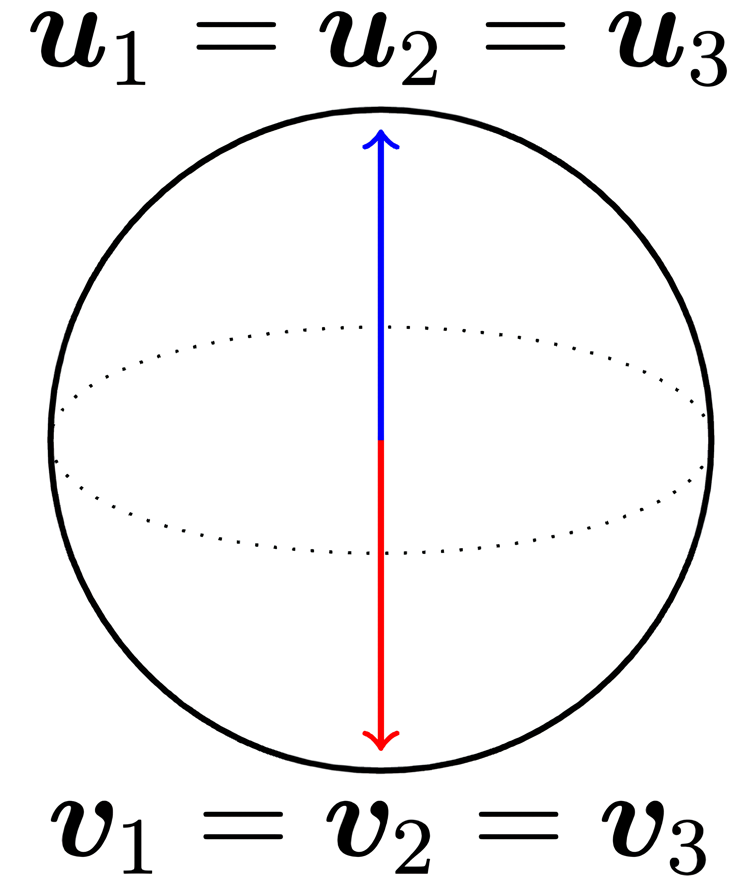} \\
     (a) $\delta=0$ & (b) $\delta=1$ & (c) $\delta=\infty$ \\
    \end{tabular}
    \caption{The proposed double-constant embedding model in Def.~\ref{def:model} for different $\delta$ values, when we have $N=3$ embedding vector pairs $\{(\vu_i^{\delta}, \vv_i^{\delta})\}_{i=1}^N$ in $d=3$ dimensional space. 
    (1) $\delta=0$ (simplex ETF), the angle between positive pair ($\vu_i^{0}$ and $\vv_i^{0}$) is 0.
    (2) $\delta=1$, the angle between $\vu_i^{1}$ and $\vv_i^{1}$ is $\frac{\pi}{2}$.
    (3) $\delta = \infty$, the angle between $\vu_i^{\infty}$ and $\vv_i^{\infty}$ is $\pi$, forming `antipodal' embedding.}
    \label{fig:model}
\end{figure}



\subsection{Optimal Embeddings are in \ours{}}\label{subsec:opt_emb_DEM}

Now we provide theoretical results supporting that the \ours{} proposed in Def.~\ref{def:model} is sufficient to find the optimal embedding that minimizes the sigmoid loss. We start with a theorem stating that the optimal embedding of CL can be found in the \ours{}, under a mild condition on the loss function.

\begin{theorem}\label{thm:DEM is sufficient}
    Consider an arbitrary contrastive loss that follows the form of \begin{align}\label{eqn:loss_symm}
        \Ls(\mU, \mV) = \sum_{i=1}^N  l_{\op{p}}(\vu_i^\top \vv_i) + \sum_{i=1}^N \sum_{j\in [N] \setminus \{i\}} l_{\op{n}}(\vu_i^\top \vv_j),
    \end{align}
where $l_{\op{p}}$ is an arbitrary convex decreasing function for positive pairs, and $l_{\op{n}}$ is an arbitrary convex increasing function for negative pairs.

Then, the optimal embeddings 
$(\mU^{\star}, \mV^{\star}) = \argmin_{\mU, \mV} \Ls(\mU, \mV)   $
can be uniquely found in the \ours{}, \ie 
\begin{align}
    \exists! \delta^\star \ge 0  \text{ such that } \left( \mU(\delta^\star), \mV(\delta^\star)\right) = \left( \mU^{\star}, \mV^{\star}\right).
\end{align}

\end{theorem}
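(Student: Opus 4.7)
The plan is to reduce this infinite-dimensional minimization to a one-parameter optimization over $\delta \ge 0$ via two comparison steps: a Jensen lower bound for $\Ls$ in terms of the averaged cross-similarities $(c_1, c_2)$, followed by a sharp feasibility constraint on $(c_1, c_2)$ whose boundary is precisely the DEM curve.

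First, $\Ls$ depends only on the cross-Gram entries $M_{ij} := \vu_i^\top \vv_j$. Convexity of $l_{\op p}$ and $l_{\op n}$ together with Jensen's inequality yields
\[\Ls(\mU, \mV) \ge \Phi(c_1, c_2) := N l_{\op p}(c_1) + N(N-1) l_{\op n}(c_2),\]
where $c_1 := \frac{1}{N}\sum_i M_{ii}$ and $c_2 := \frac{1}{N(N-1)}\sum_{i\ne j} M_{ij}$, with equality when all $M_{ii}$ equal $c_1$ and all off-diagonal $M_{ij}$ equal $c_2$.

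Second, I will establish that any feasible $(c_1, c_2)$ arising from unit-norm vectors satisfies $c_2 \ge -\frac{N - (N-2) c_1}{2(N-1)}$. Letting $\bar{\vu} := \frac{1}{N}\sum_i \vu_i$ and $\bar{\vv} := \frac{1}{N}\sum_i \vv_i$, the identity $c_1 + (N-1) c_2 = N \bar{\vu}^\top \bar{\vv}$, obtained by summing all entries of $M$, converts the bound into $(1 - c_1) + 2 \bar{\vu}^\top \bar{\vv} \ge 0$. I would prove the latter by combining $\|\bar{\vu} + \bar{\vv}\|^2 \ge 0$ (equivalently $\|\bar{\vu}\|^2 + \|\bar{\vv}\|^2 \ge -2 \bar{\vu}^\top \bar{\vv}$) with the Jensen-type inequality $\frac{1}{N}\sum_i \|\vu_i - \vv_i\|^2 \ge \|\bar{\vu} - \bar{\vv}\|^2$. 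Since $\frac{1}{N}\sum_i \|\vu_i - \vv_i\|^2 = 2(1 - c_1)$ and $\|\bar{\vu} - \bar{\vv}\|^2 = \|\bar{\vu}\|^2 + \|\bar{\vv}\|^2 - 2 \bar{\vu}^\top \bar{\vv}$, chaining yields $2(1 - c_1) \ge -4 \bar{\vu}^\top \bar{\vv}$, as desired. The equality set of this chain matches exactly the DEM pattern ($\bar{\vu} = -\bar{\vv}$ and $\vu_i - \vv_i$ constant in $i$), which one checks directly from the formulas in Prop.~\ref{prop:model}; hence the DEM curve coincides with the boundary $c_2 = -\frac{N - (N-2) c_1}{2(N-1)}$.

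Third, since $l_{\op p}$ is decreasing and $l_{\op n}$ increasing, the minimum of $\Phi$ over the feasible region $\{c_2 \ge c_2^{\op{DEM}}(c_1)\}$ is attained on this boundary. Since $\delta \mapsto c_1(\delta) = \frac{1-\delta^2}{1+\delta^2}$ is a strictly decreasing bijection $[0, \infty] \to [-1, 1]$, the 1D restriction $\delta \mapsto \Phi(c_1(\delta), c_2(\delta))$ has a unique minimizer $\delta^\star$ (with strict convexity of the 1D loss inherited from $l_{\op p}, l_{\op n}$ in the non-degenerate setting), and DEM$(\delta^\star)$ saturates both the Jensen bound and the feasibility bound simultaneously, so it is the unique optimizer of $\Ls$ within the DEM family. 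The main obstacle is the sharp feasibility bound of Step 2: a naive Cauchy-Schwarz gives only $|\bar{\vu}^\top \bar{\vv}| \le 1$, which is too weak, so the proof must exploit the specific combination of Jensen on $\|\vu_i - \vv_i\|^2$ and the nonnegativity $\|\bar{\vu} + \bar{\vv}\|^2 \ge 0$ that matches the equality pattern of DEM.
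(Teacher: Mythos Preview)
Your proposal is correct and follows essentially the same approach as the paper: both apply Jensen's inequality to reduce $\Ls$ to a function of the averages $(c_1,c_2)$, then establish the sharp feasibility bound $c_2 \ge -\frac{N-(N-2)c_1}{2(N-1)}$ via the identical chain $\frac{1}{N}\sum_i\|\vu_i-\vv_i\|^2 \ge \|\bar\vu-\bar\vv\|^2 \ge -4\bar\vu^\top\bar\vv$ (the paper packages this as Lemma~\ref{thm:model} defining $h(c)$, but the underlying inequalities are the same as your Step~2). The only cosmetic difference is that the paper phrases the constraint via the auxiliary function $h(c)$ rather than writing the explicit linear inequality in $(c_1,c_2)$; your formulation is arguably cleaner, and both proofs leave the uniqueness of $\delta^\star$ at the same informal level.
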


Note that in the above theorem, we assume the loss function $\Ls$ satisfies Eq.~\ref{eqn:loss_symm}. This is quite mild assumption, since various loss functions used in practice satisfy the condition. For example, the sigmoid loss $\Ls^{\op{sig}}$ in Eq.~\ref{loss:siglip} is in the form of Eq.~\ref{eqn:loss_symm} (as shown below), thus the above theorem implies the following.

\begin{corollary}\label{thm:opt_is_symmetric}
Consider the sigmoid loss $\Ls^{\op{sig}}(\mU, \mV)$ defined in Eq.~\ref{loss:siglip}. 
Then,
\begin{align}
\min_{\mU, \mV} \Ls^{\op{sig}}(\mU, \mV) = \min_{\delta \geq 0 } \Ls^{\op{sig}}(\mU(\delta), \mV(\delta)).
\end{align}

In other words, the optimal embedding vectors for the sigmoid loss can be found in the \ours{}.
\end{corollary}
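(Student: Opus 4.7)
The plan is to derive Corollary~\ref{thm:opt_is_symmetric} as an immediate consequence of Theorem~\ref{thm:DEM is sufficient}, so the work reduces to verifying that the sigmoid loss fits the template of Eq.~\ref{eqn:loss_symm} with a positive-pair term $l_{\op{p}}$ that is convex decreasing and a negative-pair term $l_{\op{n}}$ that is convex increasing.

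First, I would rewrite $\Ls^{\op{sig}}$ using the identity $-\log \frac{1}{1+e^{z}} = \log(1+e^{z})$. Defining
\[
l_{\op{p}}(x) := \tfrac{1}{N}\log\!\bigl(1+e^{-tx+b}\bigr), \qquad l_{\op{n}}(x) := \tfrac{1}{N}\log\!\bigl(1+e^{tx-b}\bigr),
\]
the expression in Eq.~\ref{loss:siglip} becomes exactly $\sum_{i} l_{\op{p}}(\vu_i^\top \vv_i) + \sum_{i}\sum_{j\neq i} l_{\op{n}}(\vu_i^\top \vv_j)$, which matches the form required by Theorem~\ref{thm:DEM is sufficient}.

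Second, I would verify monotonicity and convexity of $l_{\op{p}}$ and $l_{\op{n}}$ by direct differentiation. Since $t>0$, writing $\sigma(z) = 1/(1+e^{-z})$ for the logistic function, a short computation gives $l_{\op{p}}'(x) = -\tfrac{t}{N}\sigma(-tx+b) < 0$ and $l_{\op{p}}''(x) = \tfrac{t^2}{N}\sigma(-tx+b)\bigl(1-\sigma(-tx+b)\bigr) > 0$, so $l_{\op{p}}$ is strictly decreasing and strictly convex on all of $\sR$. An identical calculation yields $l_{\op{n}}'(x) = \tfrac{t}{N}\sigma(tx-b) > 0$ and $l_{\op{n}}''(x) > 0$, so $l_{\op{n}}$ is strictly increasing and strictly convex. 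Both hypotheses of Theorem~\ref{thm:DEM is sufficient} are therefore satisfied on $[-1,1]$ (indeed on all of $\sR$).

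Applying Theorem~\ref{thm:DEM is sufficient} then gives a unique $\delta^\star \ge 0$ with $(\mU(\delta^\star), \mV(\delta^\star)) = (\mU^\star, \mV^\star)$, from which the stated equality $\min_{\mU,\mV}\Ls^{\op{sig}}(\mU,\mV) = \min_{\delta \ge 0}\Ls^{\op{sig}}(\mU(\delta),\mV(\delta))$ follows at once. There is no genuine obstacle to overcome: the proof is a pure verification of hypotheses, and all substantive mathematical content sits in Theorem~\ref{thm:DEM is sufficient} rather than in this corollary. The only point worth double-checking is that $l_{\op{p}}, l_{\op{n}}$ inherit convexity and strict monotonicity over the domain $[-1,1]$ of admissible inner products, which the softplus-type derivatives above confirm uniformly.
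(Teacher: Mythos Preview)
Your proposal is correct and matches the paper's own proof essentially line for line: the paper also identifies $l_{\op{p}}(x)=\log(1+\exp(-tx+b))$ and $l_{\op{n}}(x)=\log(1+\exp(tx-b))$ and then invokes Theorem~\ref{thm:DEM is sufficient}. Your version is slightly more thorough in that you absorb the $1/N$ factor into $l_{\op{p}},l_{\op{n}}$ and explicitly verify the convexity/monotonicity via derivatives, whereas the paper leaves these checks implicit.
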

\begin{proof}
The sigmoid loss $\Ls^{\op{sig}}$ in Eq.~\ref{loss:siglip} can be reformulated as Eq.~\ref{eqn:loss_symm}, when $l_{\op{p}} (x) = \log (1 + \exp(-tx + b))$ and $ l_{\op{n}} (x) = \log (1 + \exp(tx - b))$ for a given temperature $t>0$ and bias $b\geq0$. Combining this with Theorem~\ref{thm:DEM is sufficient} completes the proof.
\end{proof}

Now we provide a proof of Theorem~\ref{thm:DEM is sufficient}.
In order to establish Theorem~\ref{thm:DEM is sufficient}, we introduce the mathematical result, which demonstrates that
the \ours{} defined in Def.~\ref{def:model} is a structure that minimizes the sum of the negative pairs' inner product, for a given sum of the positive pairs' inner product:

\begin{lemma}
    \label{thm:model}
    Suppose $d\geq N$.
    Let $\{\vu_i, \vv_i \}_{i=1}^N $ be $2N$ distinct vectors in $\sR^d$, satisfying $\vu_i^\top\vu_i=\vv_i^\top\vv_i=1$ for all $i\in [N]$.
    Then, for every $c  \in[-1,1]$, 
    there exists unique $\delta \ge 0$ such that 
    \begin{align}
    \label{eqn:minimization}
     \left(\mU(\delta), \mV(\delta)\right) =
     \argmin_{(\mU, \mV) \in S_{(c)} } \sum_{i=1}^N\sum_{j\in [N] \setminus \{i\}} \vu_i^\top \vv_j.
    \end{align}
    where $S_{(c)} = \left\{ (\mU, \mV) : \frac{1}{N} \sum_{i=1}^N \vu_i^\top \vv_i=c \right\}$
\end{lemma}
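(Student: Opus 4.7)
The plan is to reduce the $2Nd$-variable optimization to a scalar problem via a mean--fluctuation decomposition. Define $\bar{\vu} := \frac{1}{N}\sum_i \vu_i$, $\bar{\vv} := \frac{1}{N}\sum_j \vv_j$, and the fluctuations $\Delta \vu_i := \vu_i - \bar{\vu}$, $\Delta \vv_i := \vv_i - \bar{\vv}$, which satisfy $\sum_i \Delta \vu_i = \sum_i \Delta \vv_i = \vzero$. Expanding the objective and then using the constraint $\sum_i \vu_i^\top \vv_i = Nc$,
\begin{align*}
  \sum_{i \neq j} \vu_i^\top \vv_j \;=\; \Bigl(\sum_i \vu_i\Bigr)^\top\! \Bigl(\sum_j \vv_j\Bigr) - \sum_i \vu_i^\top \vv_i \;=\; N^2 \bar{\vu}^\top \bar{\vv} - Nc \;=\; N(N-1)c - Ns,
\end{align*}
where $s := \sum_i \Delta \vu_i^\top \Delta \vv_i$ and the last equality uses $N\bar{\vu}^\top \bar{\vv} = Nc - s$. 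Minimizing the objective is therefore equivalent to \emph{maximizing} the scalar $s$ subject only to the unit-norm constraints.

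Next I would derive an upper bound on $s$ parameterized by $a := \|\bar{\vu}\|$ and $b := \|\bar{\vv}\|$. Summing $\|\vu_i\|^2 = 1$ over $i$ and using $\sum_i \Delta \vu_i = \vzero$ gives $\sum_i \|\Delta \vu_i\|^2 = N(1 - a^2)$, and similarly $\sum_i \|\Delta \vv_i\|^2 = N(1 - b^2)$. Cauchy--Schwarz applied to the Frobenius inner product of the matrices $[\Delta \vu_1,\ldots,\Delta \vu_N]$ and $[\Delta \vv_1,\ldots,\Delta \vv_N]$ yields the first bound
\begin{align*}
  s \;\le\; N\sqrt{(1 - a^2)(1 - b^2)},
\end{align*}
while $\bar{\vu}^\top \bar{\vv} = c - s/N$ combined with $|\bar{\vu}^\top \bar{\vv}| \le ab$ yields the second, $s \le N(c + ab)$. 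I would then maximize $\min\bigl(N\sqrt{(1-a^2)(1-b^2)},\, N(c+ab)\bigr)$ over $(a, b) \in [0, 1]^2$. The two bounds coincide at $a = b = \sqrt{(1-c)/2}$ with common value $N(1+c)/2$; and a double AM--GM argument (any $(a,b)$ making both bounds strictly exceed $(1+c)/2$ forces both $a^2 + b^2 < 1-c$ and $a^2 + b^2 > 1-c$) rules out any other candidate. This gives the desired lower bound $\sum_{i \neq j} \vu_i^\top \vv_j \ge -N(N - (N-2)c)/2$.

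To confirm attainment, set $\delta^{\star} := \sqrt{(1-c)/(1+c)}$, the unique $\delta \ge 0$ solving $c = (1-\delta^2)/(1+\delta^2)$---strict monotonicity of this map on $[0, \infty]$ also delivers the claimed uniqueness of $\delta$. Using Def.~\ref{def:model} and the zero-sum property of the $(N-1)$-simplex ETF, a direct calculation shows that CCEM$(\delta^\star)$ satisfies $\bar{\vu} = -\bar{\vv}$, $\|\bar{\vu}\|^2 = (\delta^\star)^2/(1 + (\delta^\star)^2) = (1-c)/2$, $\Delta \vv_i = \Delta \vu_i$, and $\|\Delta \vu_i\|^2 = 1/(1 + (\delta^\star)^2) = (1+c)/2$ for every $i$. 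These are precisely the conditions that simultaneously saturate both of the inequalities above, so $s = N(1+c)/2$ is attained by CCEM$(\delta^\star)$, which therefore lies in the argmin.

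The main obstacle I anticipate is showing that the two upper bounds on $s$ are simultaneously tight \emph{by a legitimate unit-norm configuration}. Cauchy--Schwarz tightness forces $\Delta \vv_i = \Delta \vu_i$, after which the unit-norm constraints on $\vu_i$ and $\vv_i$, combined with $\bar{\vv} = -\bar{\vu}$, force $\bar{\vu} \perp \Delta \vu_i$ and $\|\Delta \vu_i\|^2 = (1+c)/2$ for every $i$---i.e., $N$ equal-length zero-sum vectors inside the $(d-1)$-dimensional hyperplane $\bar{\vu}^\perp$. Such a configuration exists precisely because the hypothesis $d \ge N$ leaves room for an $(N-1)$-simplex ETF in that hyperplane, which is exactly how CCEM is constructed. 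This is where the dimension hypothesis enters the proof and where the argument would break down for $d < N$.
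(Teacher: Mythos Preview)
Your proof is correct, and the core idea---split into centroids $\bar\vu,\bar\vv$ and fluctuations $\Delta\vu_i,\Delta\vv_i$, with the equality case pinned down by $\Delta\vv_i=\Delta\vu_i$ and $\bar\vv=-\bar\vu$---is the same as the paper's. The execution differs: the paper avoids your two-bound $(a,b)$-optimization by writing the single chain
\[
\textstyle\sum_i\|\vu_i-\vv_i\|^2 \;\ge\; N\|\bar\vu-\bar\vv\|^2 \;=\; N\|\bar\vu+\bar\vv\|^2-4N\,\bar\vu^\top\bar\vv \;\ge\; -4N\,\bar\vu^\top\bar\vv,
\]
and substituting $\sum_i\|\vu_i-\vv_i\|^2=2N(1-c)$ to obtain $\bar\vu^\top\bar\vv\ge-(1-c)/2$, which is exactly your $s\le N(1+c)/2$. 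In your variables, the first inequality above is the AM-weakened form of your $B_1$, namely $s\le N\bigl(1-\tfrac{a^2+b^2}{2}\bigr)$, and the second is the AM-weakened form of your $B_2$, namely $s\le N\bigl(c+\tfrac{a^2+b^2}{2}\bigr)$; adding them eliminates $(a,b)$ in one stroke. Your detour through the $(a,b)$-maximization is therefore valid but unnecessary---indeed, your double AM--GM contradiction is precisely what collapses your sharper GM bounds back to these AM ones at the optimum $a=b$. What your route buys is a clearer picture of \emph{why} the constraint $d\ge N$ is needed (you make explicit that the fluctuations must be $N$ equal-length zero-sum vectors in the hyperplane $\bar\vu^\perp$), whereas the paper simply verifies that \ours{} attains the bound.
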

\begin{proof}
    See Appendix~\ref{appendix:first}.
\end{proof}



It should be pointed out that there could be other embedding vectors that satisfy the minimization of Eq.~\ref{eqn:minimization}. However, the unique existence of $\delta$ in Lemma~\ref{thm:model} implies that there is a unique \ours{} structure satisfying the minimization property of Eq.~\ref{eqn:minimization}. This is crucial as we require the double-constant properties of the \ours{} in Eq.~\ref{eqn:double-const-DEM} to prove Theorem~\ref{thm:DEM is sufficient} as below.

\paragraph{Proof of Theorem~\ref{thm:DEM is sufficient}}

\begin{proof}
    Given two embedding matrices $\mU$ and $\mV$, define $\mM$ with $M_{ij}= \vu_i^\top \vv_j$ as the similarity matrix of two embeddings.
    Define the function $h$ of $c\in[-1,1]$ as
     \begin{align}
     h(c) :=
     \min_{\left\{ (\mU, \mV) : \frac{1}{N} \sum_{i=1}^N \vu_i^\top \vv_i=c \right\}}\sum_{i=1}^N \sum_{j\in [N] \setminus \{i\}} \vu_i^\top \vv_j,
    \end{align}
    which is the minimum summation of the off-diagonal terms of similarity matrix $\mM$, among embedding matrices $\mU, \mV$ satisfying that the sum of diagonal terms is $Nc$.
    
    To simplify the notation, define
     \begin{align}
    l(x, y) :=
    l_{\op{p}}\left(\frac{x}{N} \right) +(N-1) l_{\op{n}}\left(\frac{y}{N(N-1)}\right).
    \end{align}

    By using Jensen's inequality on the loss $\Ls$ in Eq.~\ref{eqn:loss_symm},
    \begin{align}
         \frac{1}{N}\Ls &\geq
         l_{\op{p}}\left(\frac{1}{N}\sum_{i=1}^N \vu_i^\top \vv_i\right) +\frac{1}{N}\sum_{i=1}^N \sum_{j\in [N] \setminus \{i\}} l_{\op{n}}(\vu_i^\top \vv_j)
         \\
         & \ge l_{\op{p}}\left(\frac{1}{N}\sum_{i=1}^N \vu_i^\top \vv_i\right) \\
         &+(N-1) \cdot l_{\op{n}}\left(\frac{1}{N(N-1)} \sum_{i=1}^N \sum_{j \in [N]\setminus \{i\}} \vu_i^\top \vv_j\right) \\
         &= l\left(\sum_{i=1}^N \vu_i^\top \vv_i,\;\; \sum_{i=1}^N \sum_{j\in [N] \setminus \{i\}}\vu_i^\top \vv_j\right)
         \\
         &\geq l\left(\sum_{i=1}^N \vu_i^\top \vv_i,\;\;
         h\left(\frac{1}{N} \sum_{i=1}^N \vu_i^\top \vv_i\right) 
         \right)\label{eqn:loss_lower_bound}
    \end{align}
    where the equality conditions of Jensen's inequalities (1st and 2nd inequalities in the above equation) are
    \begin{align}\label{eqn:jensen_eq_condition}
        \vu_i^\top\vv_i &= c_1 \quad \forall i, \\
        \vu_i^\top\vv_j &= c_2 \quad \forall i, j \in [N] \text{ with } i\neq j.
    \end{align}
    The last inequality holds since $l(x,y)$ is an increasing function with respect to $y$.
For every $c \in[-1,1]$, by Lemma~\ref{thm:model}, there is a unique $\delta = \delta(c) \geq0$ 
    such that
    \begin{align}
     (\mU(\delta), \mV(\delta)) =
     \argmin_{(\mU, \mV) \in S_{(c)}} \sum_{i=1}^N \sum_{j\in [N] \setminus \{i\}} \vu_i^\top \vv_j
    \end{align}
    where $S_{(c)} = \left\{ (\mU, \mV) : \frac{1}{N} \sum_{i=1}^N \vu_i^\top \vv_i=c \right\}$.
    It implies 
    \begin{align}
    \label{eqn:thm1:eq9}
    \sum_{i=1}^N \sum_{j\in [N] \setminus \{i\}}\left(\vu_i^\delta\right)^\top \vv_j^\delta  = h \left( \frac{1}{N} \sum_{i=1}^N \left(\vu_i^\delta\right)^\top \vv_i^\delta \right).
    \end{align}
    Moreover, every \ours{} fulfills the equality conditions given in Eq.~\ref{eqn:jensen_eq_condition} 
    , due to its double-constant properties in Eq.~\ref{eqn:double-const-DEM}. Thus, combining these facts with Eq.~\ref{eqn:thm1:eq9}, the inequalities of Eq.~\ref{eqn:loss_lower_bound} reduces to equality as
    \begin{align}
        \label{eqn:thm1:equality}
        &\Ls (\mU(\delta), \mV(\delta)) \\
        & \quad = N \cdot l\left(\sum_{i=1}^N \left(\vu_i^\delta\right)^\top \vv_i^\delta,
         h\left(\frac{1}{N} \sum_{i=1}^N \left(\vu_i^\delta\right)^\top \vv_i^\delta\right) 
         \right).
    \end{align}

    Since $\delta = \delta(c)$ is a function of $c = \frac{1}{N} \sum_{i=1}^N \vu_i^\top \vv_i$, from Eq.~\ref{eqn:loss_lower_bound}, we have
    \begin{align}
        \Ls
        &\geq N \cdot l\left(\sum_{i=1}^N \vu_i^\top \vv_i,\;
         h\left(\frac{1}{N} \sum_{i=1}^N \vu_i^\top \vv_i\right) 
         \right)
        \\
         &\geq N \min_{\frac{1}{N} \sum_{i=1}^N \vu_i^\top \vv_i } l\left(\sum_{i=1}^N \vu_i^\top \vv_i,\;
         h\left(\frac{1}{N} \sum_{i=1}^N \vu_i^\top \vv_i\right) 
         \right)
         \\
        &= N\min_{\delta \geq 0} l\left(\sum_{i=1}^N \left(\vu_i^\delta\right)^\top \vv_i^\delta,\;
         h\left(\frac{1}{N} \sum_{i=1}^N \left(\vu_i^\delta\right)^\top \vv_i^\delta\right) 
         \right)
        \\
        &= \min_{\delta \geq 0} \Ls(\mU(\delta), \mV(\delta))
    \end{align}
    where the last equality holds from Eq.~\ref{eqn:thm1:equality}.

    Therefore, the optimal embeddings ($\mU(\delta^{\star}), \mV(\delta^{\star})$) found from the \ours{} minimize the contrastive loss over all possible embeddings. 

\end{proof}

\begin{remark}
The proposed \ours{} can be effectively utilized not only for the loss in Eq.~\ref{eqn:loss_symm} format, but also for many other contrastive losses, \eg InfoNCE loss. 
Similar to Corollary~\ref{thm:opt_is_symmetric}, 
it can be shown that 
the optimal embedding vectors for InfoNCE loss can be found in \ours{}, as formally stated below:
\begin{corollary}\label{cor:DEM infoNCE}
Consider InfoNCE loss $\Ls^{\op{InfoNCE}}(\mU, \mV)$ defined in Eq.~\ref{loss:info}.
Then,
\begin{align}
\min_{\mU, \mV} \Ls^{\op{InfoNCE}}(\mU, \mV) = \min_{\delta \geq 0 } \Ls^{\op{InfoNCE}}(\mU(\delta), \mV(\delta)).
\end{align}

\end{corollary}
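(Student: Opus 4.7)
The plan is to leverage two prior results already established in the paper: Corollary~\ref{thm:lu}, which characterizes the optimizer of $\Ls^{\op{InfoNCE}}$, and Proposition~\ref{prop:model}, which characterizes the $\delta=0$ endpoint of the CCEM parameterization. Unlike Corollary~\ref{thm:opt_is_symmetric}, we cannot use Theorem~\ref{thm:DEM is sufficient} directly because the InfoNCE loss does not decompose into the separable form of Eq.~\ref{eqn:loss_symm} (the softmax denominators couple $\vu_i^\top \vv_i$ with all $\vu_i^\top \vv_j$'s). So the strategy here is different in flavor: rather than funneling all admissible $(\mU,\mV)$ into CCEM via convexity, we simply observe that the known global minimizer of $\Ls^{\op{InfoNCE}}$ already lives inside CCEM at $\delta=0$.

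First, I would record the trivial direction $\min_{\mU,\mV}\Ls^{\op{InfoNCE}}(\mU,\mV) \le \min_{\delta \ge 0}\Ls^{\op{InfoNCE}}(\mU(\delta),\mV(\delta))$, since for each $\delta \ge 0$ the pair $(\mU(\delta),\mV(\delta))$ is a particular feasible choice in the unrestricted minimization.

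For the reverse inequality, I would invoke Corollary~\ref{thm:lu}: any minimizer $(\mU^\star,\mV^\star)$ of $\Ls^{\op{InfoNCE}}$ (with temperature $t$) satisfies $\vu_i^\star=\vv_i^\star$ for all $i \in [N]$, and $\{\vu_i^\star\}_{i=1}^N$ forms an $(N-1)$-simplex ETF. By Proposition~\ref{prop:model}, the CCEM at $\delta=0$ yields exactly this configuration: $\vu_i^0=\vv_i^0$ and $\{\vu_i^0\}_{i=1}^N$ is an $(N-1)$-simplex ETF. Since $\Ls^{\op{InfoNCE}}$ depends on $(\mU,\mV)$ only through the Gram matrix $\mM$ with $M_{ij}=\vu_i^\top\vv_j$, and any two simplex-ETF-with-$\vu_i=\vv_i$ configurations produce the same Gram matrix (up to a global orthogonal change of basis applied simultaneously to $\mU$ and $\mV$, which leaves the loss invariant), we obtain
\begin{align}
\min_{\mU,\mV}\Ls^{\op{InfoNCE}}(\mU,\mV) = \Ls^{\op{InfoNCE}}(\mU^\star,\mV^\star) = \Ls^{\op{InfoNCE}}(\mU(0),\mV(0)) \ge \min_{\delta \ge 0}\Ls^{\op{InfoNCE}}(\mU(\delta),\mV(\delta)).
\end{align}
Combining the two directions gives the desired equality.

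Because the heavy lifting is done by Corollary~\ref{thm:lu} (proved in Appendix~\ref{appendix:optimal-infonce}) and by the explicit form of the CCEM at $\delta=0$ in Proposition~\ref{prop:model}, there is no real obstacle; the only subtle point to state cleanly is the rotational invariance of $\Ls^{\op{InfoNCE}}$, which reduces any simplex ETF with $\vu_i=\vv_i$ to the specific representative $(\mU(0),\mV(0))$.
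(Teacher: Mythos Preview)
Your argument is circular within the paper's own logical dependencies. You invoke Corollary~\ref{thm:lu} and explicitly point to its proof in Appendix~\ref{appendix:optimal-infonce}, but the very first line of that appendix reads: ``It is sufficient to use the \ours{} in Def.~\ref{def:model} to find the minimum by Corollary~\ref{cor:DEM infoNCE}.'' In other words, the paper's proof of Corollary~\ref{thm:lu} (the version with a temperature parameter) \emph{relies on} Corollary~\ref{cor:DEM infoNCE}, which is exactly the statement you are trying to prove. So as written, your argument assumes its own conclusion.

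You could try to break the circle by appealing directly to Theorem~1 of \cite{lu2022neural} rather than to the paper's Corollary~\ref{thm:lu}, but that external result is stated without a temperature parameter, and the paper itself flags that the extension to general $t>0$ is what Appendix~\ref{appendix:optimal-infonce} supplies. If you want to go this route, you would need to argue independently (and without invoking Corollary~\ref{cor:DEM infoNCE}) that the simplex-ETF characterization survives the insertion of $t$; only then would your ``the optimizer is already in \ours{} at $\delta=0$'' shortcut be legitimate.

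By contrast, the paper's own proof of Corollary~\ref{cor:DEM infoNCE} does not assume knowledge of the optimizer. It mirrors the proof of Theorem~\ref{thm:DEM is sufficient}: it applies Jensen's inequality to the log-sum-exp structure of $\Ls^{\op{InfoNCE}}$ to obtain a lower bound depending only on $\sum_i \vu_i^\top\vv_i$ and $\sum_{i\ne j}\vu_i^\top\vv_j$, then uses Lemma~\ref{thm:model} to show that for each value of the diagonal sum there is a \ours{} configuration attaining that lower bound. This route is self-contained and is in fact what then \emph{enables} the short proof of Corollary~\ref{thm:lu} in Appendix~\ref{appendix:optimal-infonce}.
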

\begin{proof}
    See Appendix~\ref{appendix:ccem-is-sufficeint-for-infonce}.
\end{proof}

\end{remark}

\section{Embedding Learned by Sigmoid Loss}\label{sec:opt_embed}

In this section, we explore the optimal embedding learned by the sigmoid loss using the suggested \ours{}. In Sec.~\ref{sec:optimal}, we provide theoretical results explaining how the structure of optimal embedding varies depending on the temperature and the bias term used in the sigmoid loss, assuming $d \geq N$. The behavior proven in Sec.~\ref{sec:optimal} is empirically confirmed in Sec.~\ref{sec:experiment-main}, not only for $d \geq N$ case but also for $d < N$ case.

\subsection{Theoretical Results
}\label{sec:optimal}

Recall that Corollary~\ref{thm:opt_is_symmetric} implies that the optimal embedding vectors $\mU^{\star}, \mV^{\star}$ can be found by solving 
\begin{align}
    \min_{(\mU, \mV) \in S} \Ls^{\op{sig}}(\mU, \mV),
\end{align}
which is equivalent to finding the optimal parameter 
\begin{align}
    \delta^{\star} := \argmin_{\delta \ge 0} \Ls^{\op{sig}}(\mU(\delta), \mV(\delta)).
\end{align}
The below theorem provides the behavior of $\delta^{\star}$:

\begin{theorem}[Optimal Embedding for sigmoid loss]
    \label{thm:logsig}
    Consider the case when $t=b$, \ie the temperature and the bias are identical.
    
    \vspace{-3pt}
    When $N=3$, we have $\delta^{\star} =  0$, \ie
    the optimal embedding for sigmoid loss forms a simplex ETF, regardless of $t$ value.
    
    \vspace{-3pt}
    On the other hand, when $N\geq 4$, the optimal $\delta^{\star}$ is a monotonic decreasing function of $t$, satisfying 
    \begin{align}
        \delta^{\star} = 
        \begin{cases}
            0, & \text{ if and only if } t >\frac{N-1}{N}\log\left(N-3\right) \\
            \infty, & \text{ if and only if } t<\frac{1}{2}\log\frac{N-2}{2}.
        \end{cases}
    \end{align}
\end{theorem}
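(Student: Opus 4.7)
The plan is to combine Corollary~\ref{thm:opt_is_symmetric} with a one-dimensional strictly convex optimization. By that corollary, minimizing $\Ls^{\op{sig}}(\mU,\mV)$ over all embeddings reduces to minimizing $L(\delta) := \Ls^{\op{sig}}(\mU(\delta), \mV(\delta))$ over $\delta \ge 0$. I would substitute the double-constant inner products from Eq.~\ref{eqn:double-const-DEM} into $\Ls^{\op{sig}}$ with $b = t$, and introduce the change of variable $s = \delta^2/(1+\delta^2)$, a bijection between $\delta \in [0,\infty]$ and $s \in [0,1]$. A short computation gives $(\vu_i^\delta)^\top \vv_i^\delta = 1 - 2s$ and $(\vu_i^\delta)^\top \vv_j^\delta = -(1-s)/(N-1) - s$, which collapses $L$ into the explicit scalar objective
\begin{align}
L(s) = \log\left(1 + e^{2ts}\right) + (N-1)\log\left(1 + \exp\left(-t \cdot \frac{N + (N-2)s}{N-1}\right)\right).
\end{align}

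The key structural observation is that $L$ is strictly convex on $[0,1]$, because $x \mapsto \log(1+e^x)$ is strictly convex and each summand is its composition with an affine, non-constant function of $s$ (for $t > 0$ and $N \ge 3$). Consequently $L'$ is strictly increasing, and the unique minimizer $s^\star$ on $[0,1]$ is characterized purely by boundary-vs-interior first-order conditions: $s^\star = 0$ precisely when $L'(0) \ge 0$, $s^\star = 1$ (i.e., $\delta^\star = \infty$) precisely when $L'(1) \le 0$, and otherwise $s^\star \in (0,1)$ is the unique root of $L'$.

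A direct differentiation, with the logistic $\sigma(x) = 1/(1+e^{-x})$, yields
\begin{align}
L'(s) = t\left[2\sigma(2ts) - (N-2)\sigma\left(-t \cdot \frac{N + (N-2)s}{N-1}\right)\right].
\end{align}
At $s = 0$, using $\sigma(0) = 1/2$, the condition $L'(0) \ge 0$ reduces to $(N-2)\sigma(-tN/(N-1)) \le 1$, equivalently $e^{tN/(N-1)} \ge N-3$; this is vacuous when $N = 3$ (giving $\delta^\star = 0$ unconditionally for every $t > 0$), and becomes the threshold $t \ge \tfrac{N-1}{N}\log(N-3)$ for $N \ge 4$. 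At $s = 1$, the argument $(N + (N-2))/(N-1) = 2$, and using $\sigma(-x) = 1 - \sigma(x)$, the condition $L'(1) \le 0$ collapses to $N\sigma(2t) \le N-2$, which rearranges to $t \le \tfrac{1}{2}\log\tfrac{N-2}{2}$.

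For the monotonicity claim, I would write $L'(s; t) = t\, H(s, t)$ with $H(s,t) = 2\sigma(2ts) - (N-2)\sigma(-t A(s))$ and $A(s) = (N + (N-2)s)/(N-1) > 0$. Strict convexity of $L$ gives $\partial H/\partial s > 0$, and a direct computation gives $\partial H/\partial t = 4s\, \sigma'(2ts) + (N-2)A(s)\, \sigma'(-tA(s)) > 0$ for $s > 0$ (using $\sigma' > 0$). Since the interior minimizer satisfies $H(s^\star, t) = 0$, the implicit function theorem forces $ds^\star/dt < 0$; the monotone map $\delta \mapsto s$ then transfers the monotonicity to $\delta^\star$. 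The main obstacle I anticipate is simply disciplined bookkeeping after substituting the double-constant values into the sigmoid loss: the change of variable $s = \delta^2/(1+\delta^2)$ is the move that does the real work, because it linearizes the softplus arguments and simultaneously delivers strict convexity, both closed-form thresholds, and the monotonicity in $t$ from elementary logistic identities.
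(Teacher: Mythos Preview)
Your proposal is correct and follows the same overall architecture as the paper: reduce to a one-dimensional problem via Corollary~\ref{thm:opt_is_symmetric}, analyze the sign of the derivative at the two endpoints to extract the thresholds, and invoke the implicit function theorem for the monotonicity of the interior minimizer in $t$.

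The one genuine difference is your change of variable $s=\delta^2/(1+\delta^2)$. The paper keeps $\delta$ as the variable, computes $\partial \Ls^{\op{sig}}/\partial\delta$ directly, factors out a positive multiplier, and is left with an auxiliary function $g(N,t,\delta)$ whose strict monotonicity in $\delta$ it must verify by a separate derivative computation; it then evaluates $g$ at $\delta=0$ and $\delta\to\infty$ to obtain the two thresholds and applies the implicit function theorem to $g=0$. Your substitution makes the softplus arguments affine in $s$, so strict convexity of $L(s)$ is immediate and the auxiliary $g$ (and the second-derivative check at $\delta=0$ the paper performs) become unnecessary: the endpoint first-order conditions $L'(0)\ge 0$ and $L'(1)\le 0$ drop out of the logistic identities in one line each, and the implicit function theorem applies directly to $H(s,t)=0$. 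The payoff is a shorter, more transparent argument; the paper's route is more hands-on but arrives at exactly the same boundary equations and the same IFT step. A minor point worth recording when you write this up: your endpoint criteria naturally give non-strict inequalities $t\ge \tfrac{N-1}{N}\log(N-3)$ and $t\le \tfrac12\log\tfrac{N-2}{2}$ (at equality, strict convexity still pins the minimizer to the boundary), which is in fact the precise statement; the paper's own proof exhibits the same behavior at the boundary cases.
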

\begin{proof}
    See Appendix~\ref{appendix:temperature-siglip}.
\end{proof}
\begin{remark}
    The above theorem implies that the optimal embeddings form a simplex ETF when the temperature is sufficiently large, while they reduce to the `antipodal' embedding when the temperature is smaller than a threshold. Note that the threshold temperature value is in the order of $\Theta(\log N)$.
\end{remark}
\begin{remark}
    According to the empirical results reported in~\cite{zhai2023sigmoid}, SigLIP enjoys high performance \textit{only} when the temperature parameter is set to a sufficiently large number.
    This coincides with our result in Theorem~\ref{thm:logsig} stating that sufficiently large $t$ is necessary to let the trained embeddings form a simplex ETF with $\delta^{\star} = 0 $, instead of the antipodal structure with $\delta^{\star}  = \infty$. 
\end{remark}
Note that Theorem~\ref{thm:logsig} implies that for a given $N$, the optimal $\delta^{\star}$ (representing the distance between positive pairs) decreases as the temperature parameter $t=b$ increases, leading to a tighter alignment of the positive pairs, \ie $\vu_i^\top \vv_i$ approaches to 1. Now we provide a corollary relating the optimal embeddings obtained by InfoNCE loss and sigmoid loss.



\begin{corollary}
    \label{thm:siglip}
    Consider the case when $t=b$.
    When $N=3$, the optimal solution of minimizing $\Ls^{\op{sig}}$ is equal to that of minimizing $\Ls^{\op{InfoNCE}}$, for any temperature parameter $t$.
    When $N\geq 4$,  both solutions are identical, if and only if $t >\frac{N-1}{N}\log\left(N-3\right)$.
\end{corollary}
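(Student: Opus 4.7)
The plan is to derive Corollary~\ref{thm:siglip} as an essentially immediate consequence of the two prior optimality results, by comparing them inside the common \ours{} parameterization. By Corollary~\ref{thm:lu} (together with Corollary~\ref{cor:DEM infoNCE}, which guarantees that the InfoNCE optimum lies in the \ours{}), the unique InfoNCE optimum corresponds to $\delta^{\star}_{\op{InfoNCE}} = 0$ for every $t>0$, since the simplex ETF structure with $\vu_i^{\star} = \vv_i^{\star}$ is exactly the $\delta = 0$ case of Def.~\ref{def:model}. Similarly, by Corollary~\ref{thm:opt_is_symmetric}, the sigmoid optimum can be searched entirely within the \ours{}, so it reduces to the scalar optimum $\delta^{\star}_{\op{sig}}$ described in Theorem~\ref{thm:logsig}.

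With this common parameterization in hand, the two optimal embeddings coincide if and only if $\delta^{\star}_{\op{sig}} = \delta^{\star}_{\op{InfoNCE}} = 0$. The conclusion then follows by reading off the conditions for $\delta^{\star}_{\op{sig}} = 0$ from Theorem~\ref{thm:logsig}:
\begin{itemize}
    \item For $N = 3$, Theorem~\ref{thm:logsig} states that $\delta^{\star}_{\op{sig}} = 0$ for every $t > 0$, so the solutions always coincide.
    \item For $N \geq 4$, the same theorem gives $\delta^{\star}_{\op{sig}} = 0$ if and only if $t > \frac{N-1}{N}\log(N-3)$, which yields the claimed threshold.
\end{itemize}

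The main (in fact, only) subtlety I anticipate is justifying that ``equal optimal solutions'' is faithfully captured by the equality $\delta^{\star}_{\op{sig}} = \delta^{\star}_{\op{InfoNCE}}$ inside the \ours{}. This requires invoking the uniqueness clause of Theorem~\ref{thm:DEM is sufficient} (and the analogous statement in Corollary~\ref{cor:DEM infoNCE}) to guarantee that each loss admits a unique \ours{} representative, so matching the single parameter $\delta$ is equivalent to matching the full embedding pair up to the degrees of freedom already modded out by the \ours{}. Once this identification is stated, the corollary is a one-line consequence of Theorem~\ref{thm:logsig}; no further calculation is needed.
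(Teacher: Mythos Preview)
Your proposal is correct and follows essentially the same route as the paper: the paper's proof simply recalls that the InfoNCE optimum is always the simplex ETF (Corollary~\ref{thm:lu}) and then combines this with Theorem~\ref{thm:logsig} to read off when the sigmoid optimum is also the simplex ETF. Your additional care in invoking Corollary~\ref{cor:DEM infoNCE} and the uniqueness clause of Theorem~\ref{thm:DEM is sufficient} to justify comparing the two optima via the single parameter $\delta$ is a welcome clarification, but does not constitute a different approach.
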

\begin{proof}
Recall that the optimal embedding for InfoNCE loss $\Ls^{\op{InfoNCE}}$ always follows the simplex ETF irrespective of the temperature parameter $t$, as in Corollary~\ref{thm:lu}. Combining this with Theorem~\ref{thm:logsig} completes the proof. 
\end{proof}

Therefore, by using an appropriate temperature parameter $t > \frac{N-1}{N}\log\left(N-3\right)$, the simplex ETF solution is attainable by training with the sigmoid loss~\citep{zhai2023sigmoid},
which is shown to be more efficient than training with InfoNCE loss.



\begin{figure}[t!]
    \centering
    \includegraphics[width=\linewidth]{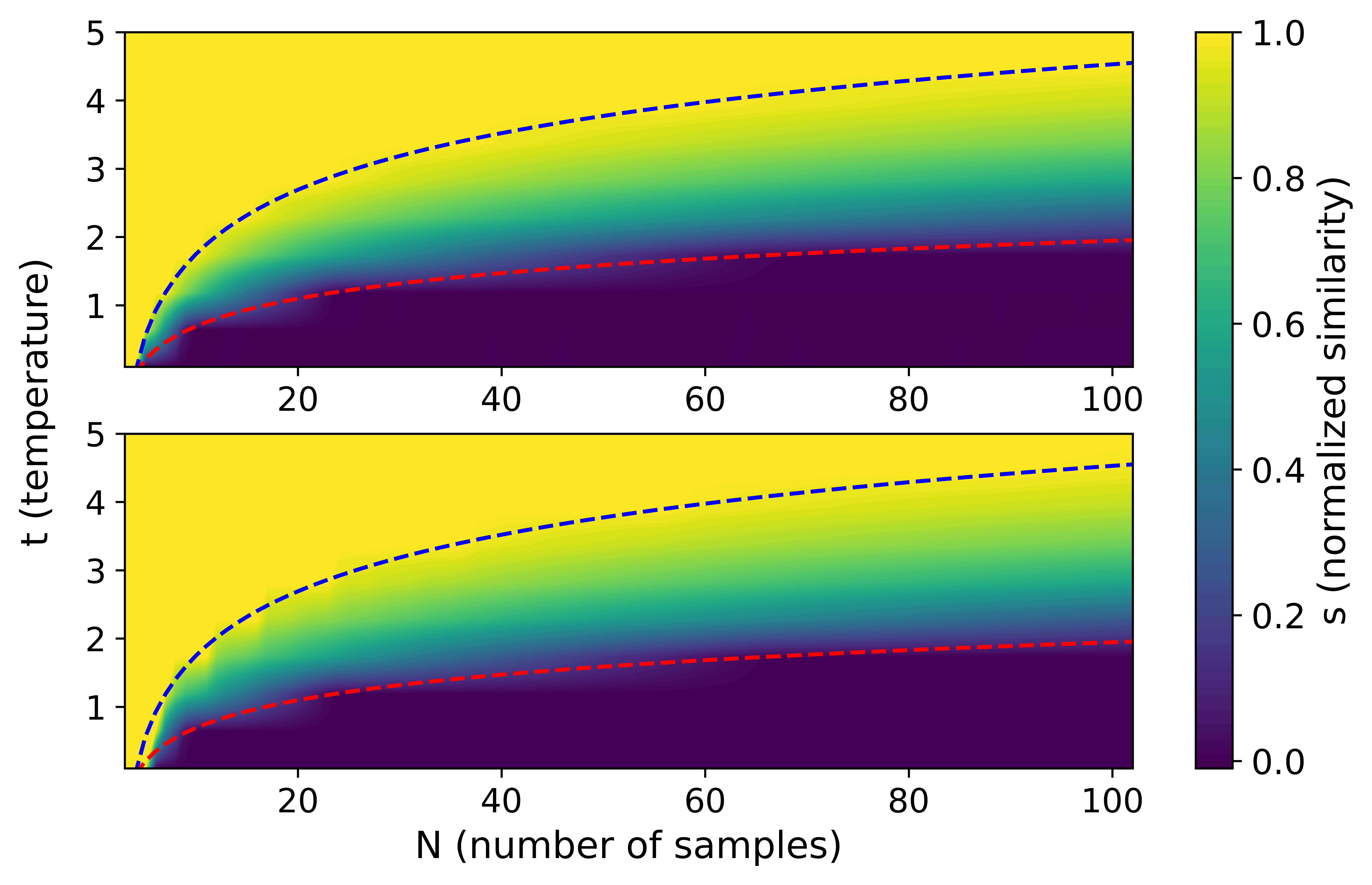}
    \caption{
    The normalized similarity $s$ of positive pairs measured for the embeddings trained by the sigmoid loss $\Ls^{\op{sig}}$, for various $N$ and $t$.
    We test for two cases: $d=N$ (upper) and $d=\frac{N}{2}$ (below).
    The blue dashed line satisfies $t=\frac{N-1}{N}\log(N-3)$, and the red dashed line satisfies $t=\frac{1}{2}\log\frac{N-2}{2}$, which are the threshold values obtained in Theorem~\ref{thm:logsig}. This plot shows that our theoretical results coincide with empirical observations.
    }
    \label{fig:syn_result}
\end{figure}

\subsection{Experimental results\label{sec:experiment-main}}

Now we provide empirical observations supporting our theoretical results. 
Specifically, we test the behavior of the optimal embedding trained by the sigmoid loss for various settings, and check whether it coincides with our result in Theorem~\ref{thm:logsig}.
Although our theoretical results hold only when $d \ge N$, we test whether the behavior of optimal embeddings coincides with the result of Theorem~\ref{thm:logsig} for two cases: 
$d=N$ and $d=\frac{N}{2}$.

\paragraph{Synthetic data without Encoder}

We train embedding vectors $\{\vu_i, \vv_i\}_{i=1}^N$, each of which is initialized by sampling from a standard normal distribution $\mathcal{N}(\mathbf{0}, \mI_d)$, followed by projecting it to a unit sphere. 
Unlike real-world scenarios, we do not use encoder structures; instead, we directly optimize the embedding vectors $\{\vu_i, \vv_i \}_{i=1}^N$ by training them for $50,000$ steps with the sigmoid loss, using learning rate $\gamma=0.5$. 
During the training process, the updated embeddings are projected back to the unit sphere, to make sure that $\vu_i^\top \vu_i = \vv_i^\top \vu_i  = 1$ for all $i \in [N]$. Here, we test various temperature parameters $t=b$, ranging from 0.1 to 5.



Given the trained embeddings $\{\vu_i, \vv_i\}_{i=1}^N$, we measure the normalized similarity of positive pairs, denoted by $ s =  \frac{1}{2} (1 + \frac{1}{N}\sum_{i=1}^N\vu_i^\top\vv_i)$. 
Note that $s=1$ holds when all positive pairs are aligned, while $s=0$ holds when all positive pairs are antipodal.

Fig.~\ref{fig:syn_result} shows the normalized similarity $s$ for various $N$ and $t$. The above plot is the result for $d=N$, while the below plot is for $d=\ceil{N/2}$.
We have two dashed lines visualizing the theoretical threshold values obtained in Theorem~\ref{thm:logsig}. The blue line satisfies $t = \frac{N-1}{N}\log(N-3)$, the minimum value allowing the optimal embedding to be the simplex ETF having $s=1$. 
The red line satisfies $t=\frac{1}{2}\log\frac{N-2}{2}$, the maximum value allowing the optimal embedding to be the antipodal solution having $s=0$.
Thus, our theoretical results claim that the set of $(N,t)$ points above the blue line have $s= 1$, while the points below the red line have $s= 0$, which coincide with the empirical results in Fig.~\ref{fig:syn_result}.

\paragraph{Synthetic data with Encoder}


\begin{figure}[t!]
    \centering
    \includegraphics[width=\linewidth]{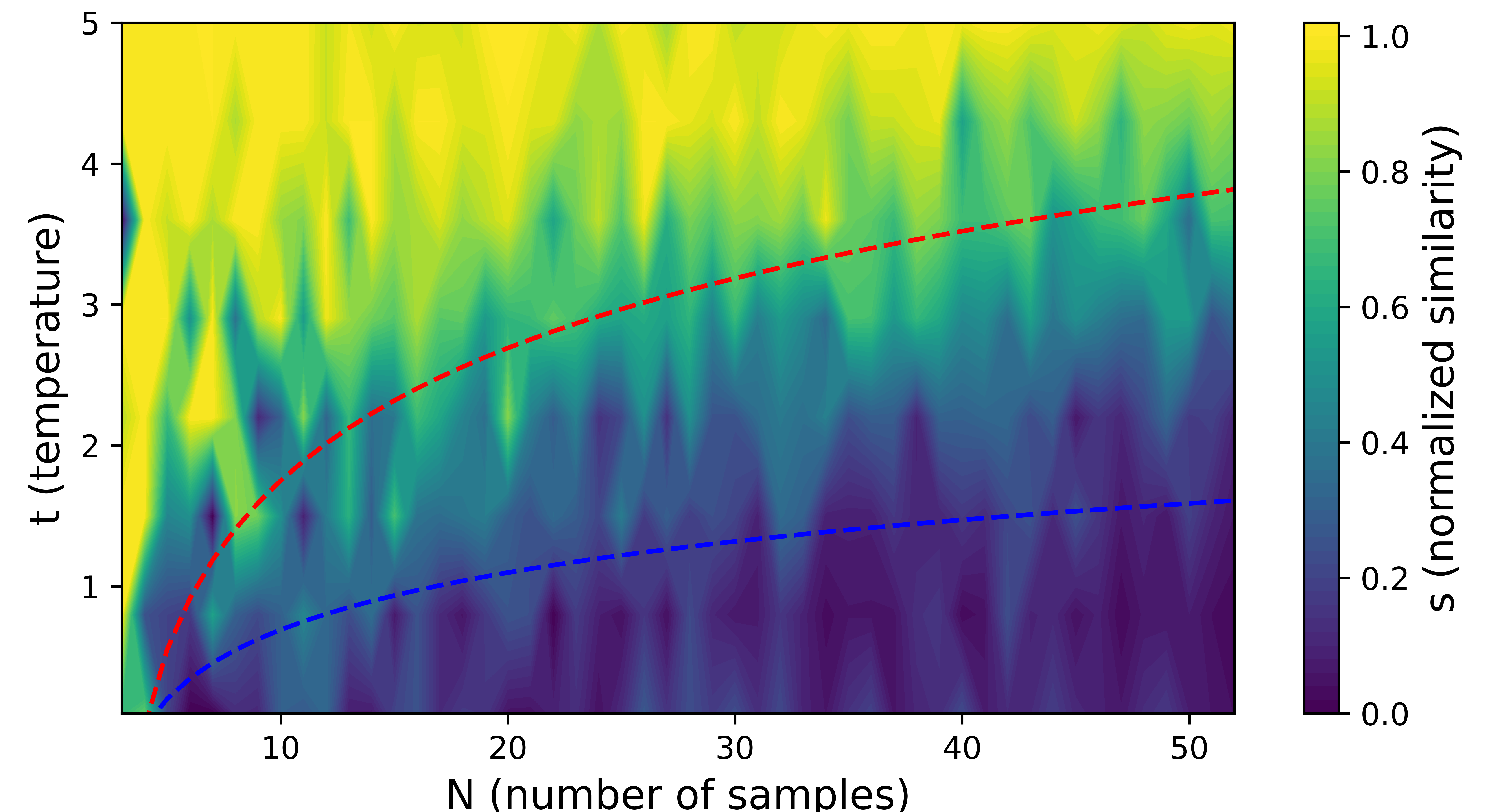}
    \caption{The normalized similarity $s = \frac{1}{2} (1 + \frac{1}{N}\sum_{i=1}^N\vu_i^\top\vv_i)$ of positive pairs measured for the embeddings trained by sigmoid loss $\Ls^{\op{sig}}$, for various $N$ and $t$ when $d=N$. Unlike the setting for Fig~\ref{fig:syn_result}, we train a encoder (two-layer fully-connected ReLU network) which outputs embeddings, rather than directly optimizing the embedding vectors.}
    \label{fig:rebuttal}
\end{figure}

Furthermore, we extend our experiments to more realistic setting where we have an encoder that generates 
the embedding vectors. In contrast to the previous synthetic experiments where the encoder output is directly optimized, we employ a two-layer fully-connected ReLU network denoted as $f$. Especially, $N$ pair of instances $\{\vx_i, \vy_i\}_{i=1}^N$ are generated from a standard normal distribution $\mathcal{N}(\mathbf{0}, \mI_{2d})$, which passes through the encoder $f$ that transforms each $2d$-dimensional vector ($\vx_i$ or $\vy_i$) into a $d$-dimensional embedding vector ($f(\vx_i)$ or $f(\vy_i)$) on a unit sphere. Then, the encoder is trained to minimize the sigmoid loss $\Ls^{\op{sig}}$. Fig.~\ref{fig:rebuttal} shows the similarity of paired  embeddings as a function of the temperature parameter $t$ used in the sigmoid loss, when $d=N$.
This result exhibits a tendency similar to Fig~\ref{fig:syn_result}, implying that the empirical results in a realistic setting also align with the theoretical findings.
\begin{remark}
    Our theoretical results are based on the assumption of $d \ge N$, while our empirical results have similar behaviors for both $d \ge N$ and $d < N$ cases. This empirical finding can be interpreted as follows. Recall that when $d$ is sufficiently large (i.e., $d \ge N$), the learned embeddings satisfy both uniformity and alignment \citep{wang2020understanding}, while the uniformity is not attainable for smaller $d$ (especially, $d < N$). Our empirical results, which measure the alignment of positive pairs, imply that smaller $d$ does not harm the alignment property, although the uniformity is broken. 
\end{remark}

\section{Conclusion}

This paper investigated the embedding structure learned by contrastive learning with the sigmoid loss, motivated by the recent success of SigLIP, a variant of CLIP that uses sigmoid loss instead of InfoNCE loss. 
During the route towards answering the question of finding the optimal embedding for sigmoid loss, we first defined the \textit{double-Constant Embedding Model} (\ours{}) framework that models various embedding structures. Based on our observation that the optimal embeddings are in the \ours{}, we used the \ours{} to specify the optimal embedding structure learned by the sigmoid loss. When the temperature parameter is sufficiently large, the optimal embedding forms a simplex ETF structure, while the optimal embedding reduces to the antipodal structure when the temperature parameter is set to be a small number, which coincides with the empirical observations made by the authors of SigLIP. We also conducted
experiments on synthetic datasets, the results of which coincide with our theoretical results on the behavior of optimal embeddings.

\section*{Acknowledgement}

This research was supported by the MSIT (Ministry of Science and ICT), Korea, under the ICAN (ICT Challenge and Advanced Network of HRD) support program (RS-2023-00259934) supervised by the IITP (Institute for Information \& Communications Technology Planning \& Evaluation).


\clearpage

\bibliography{__ref}

\begin{thebibliography}{}

\bibitem[Arora et~al., 2019]{arora2019theoretical}
Arora, S., Khandeparkar, H., Khodak, M., Plevrakis, O., and Saunshi, N. (2019).
\newblock A theoretical analysis of contrastive unsupervised representation learning.
\newblock {\em arXiv preprint arXiv:1902.09229}.

\bibitem[Awasthi et~al., 2022]{awasthi2022more}
Awasthi, P., Dikkala, N., and Kamath, P. (2022).
\newblock Do more negative samples necessarily hurt in contrastive learning?
\newblock In {\em International Conference on Machine Learning}, pages 1101--1116. PMLR.

\bibitem[Azarija and Marc, 2018]{azarija2018there}
Azarija, J. and Marc, T. (2018).
\newblock There is no (75, 32, 10, 16) strongly regular graph.
\newblock {\em Linear Algebra and its Applications}, 557:62--83.

\bibitem[Caron et~al., 2020]{caron2020unsupervised}
Caron, M., Misra, I., Mairal, J., Goyal, P., Bojanowski, P., and Joulin, A. (2020).
\newblock Unsupervised learning of visual features by contrasting cluster assignments.
\newblock {\em Advances in neural information processing systems}, 33:9912--9924.

\bibitem[Chen et~al., 2020a]{chen2020simple}
Chen, T., Kornblith, S., Norouzi, M., and Hinton, G. (2020a).
\newblock A simple framework for contrastive learning of visual representations.
\newblock In {\em International conference on machine learning}, pages 1597--1607. PMLR.

\bibitem[Chen et~al., 2020b]{chen2020big}
Chen, T., Kornblith, S., Swersky, K., Norouzi, M., and Hinton, G.~E. (2020b).
\newblock Big self-supervised models are strong semi-supervised learners.
\newblock {\em Advances in neural information processing systems}, 33:22243--22255.

\bibitem[Chen et~al., 2021]{chen2021intriguing}
Chen, T., Luo, C., and Li, L. (2021).
\newblock Intriguing properties of contrastive losses.
\newblock {\em Advances in Neural Information Processing Systems}, 34:11834--11845.

\bibitem[Chen and He, 2021]{chen2021exploring}
Chen, X. and He, K. (2021).
\newblock Exploring simple siamese representation learning.
\newblock In {\em Proceedings of the IEEE/CVF conference on computer vision and pattern recognition}, pages 15750--15758.

\bibitem[El-Gebeily and Fiagbedzi, 2004]{el2004certain}
El-Gebeily, M. and Fiagbedzi, Y. (2004).
\newblock On certain properties of the regular n-simplex.
\newblock {\em International Journal of Mathematical Education in Science and Technology}, 35(4):617--629.

\bibitem[Elizalde et~al., 2023]{elizalde2023clap}
Elizalde, B., Deshmukh, S., Al~Ismail, M., and Wang, H. (2023).
\newblock Clap learning audio concepts from natural language supervision.
\newblock In {\em ICASSP 2023-2023 IEEE International Conference on Acoustics, Speech and Signal Processing (ICASSP)}, pages 1--5. IEEE.

\bibitem[Fickus et~al., 2018]{fickus2018equiangular}
Fickus, M., Jasper, J., Mixon, D.~G., Peterson, J.~D., and Watson, C.~E. (2018).
\newblock Equiangular tight frames with centroidal symmetry.
\newblock {\em Applied and Computational Harmonic Analysis}, 44(2):476--496.

\bibitem[Fickus and Mixon, 2015]{fickus2015tables}
Fickus, M. and Mixon, D.~G. (2015).
\newblock Tables of the existence of equiangular tight frames.
\newblock {\em arXiv preprint arXiv:1504.00253}.

\bibitem[Fickus et~al., 2012]{fickus2012steiner}
Fickus, M., Mixon, D.~G., and Tremain, J.~C. (2012).
\newblock Steiner equiangular tight frames.
\newblock {\em Linear algebra and its applications}, 436(5):1014--1027.

\bibitem[Gale, 1953]{gale1953inscribing}
Gale, D. (1953).
\newblock On inscribing n-dimensional sets in a regular n-simplex.
\newblock {\em Proceedings of the American Mathematical Society}, 4(2):222--225.

\bibitem[Giorgi et~al., 2020]{giorgi2020declutr}
Giorgi, J., Nitski, O., Wang, B., and Bader, G. (2020).
\newblock Declutr: Deep contrastive learning for unsupervised textual representations.
\newblock {\em arXiv preprint arXiv:2006.03659}.

\bibitem[Girdhar et~al., 2023]{girdhar2023imagebind}
Girdhar, R., El-Nouby, A., Liu, Z., Singh, M., Alwala, K.~V., Joulin, A., and Misra, I. (2023).
\newblock Imagebind: One embedding space to bind them all.
\newblock In {\em Proceedings of the IEEE/CVF Conference on Computer Vision and Pattern Recognition}, pages 15180--15190.

\bibitem[Goel et~al., 2022]{goel2022cyclip}
Goel, S., Bansal, H., Bhatia, S., Rossi, R., Vinay, V., and Grover, A. (2022).
\newblock Cyclip: Cyclic contrastive language-image pretraining.
\newblock {\em Advances in Neural Information Processing Systems}, 35:6704--6719.

\bibitem[Goyal et~al., 2019]{goyal2019scaling}
Goyal, P., Mahajan, D., Gupta, A., and Misra, I. (2019).
\newblock Scaling and benchmarking self-supervised visual representation learning.
\newblock In {\em Proceedings of the ieee/cvf International Conference on computer vision}, pages 6391--6400.

\bibitem[Graf et~al., 2021]{graf2021dissecting}
Graf, F., Hofer, C., Niethammer, M., and Kwitt, R. (2021).
\newblock Dissecting supervised contrastive learning.
\newblock In {\em International Conference on Machine Learning}, pages 3821--3830. PMLR.

\bibitem[Grill et~al., 2020]{grill2020bootstrap}
Grill, J.-B., Strub, F., Altch{\'e}, F., Tallec, C., Richemond, P., Buchatskaya, E., Doersch, C., Avila~Pires, B., Guo, Z., Gheshlaghi~Azar, M., et~al. (2020).
\newblock Bootstrap your own latent-a new approach to self-supervised learning.
\newblock {\em Advances in neural information processing systems}, 33:21271--21284.

\bibitem[Gutmann and Hyv{\"a}rinen, 2010]{gutmann2010noise}
Gutmann, M. and Hyv{\"a}rinen, A. (2010).
\newblock Noise-contrastive estimation: A new estimation principle for unnormalized statistical models.
\newblock In {\em Proceedings of the thirteenth international conference on artificial intelligence and statistics}, pages 297--304. JMLR Workshop and Conference Proceedings.

\bibitem[Gutmann and Hyv{\"a}rinen, 2012]{gutmann2012noise}
Gutmann, M.~U. and Hyv{\"a}rinen, A. (2012).
\newblock Noise-contrastive estimation of unnormalized statistical models, with applications to natural image statistics.
\newblock {\em Journal of machine learning research}, 13(2).

\bibitem[He et~al., 2020]{he2020momentum}
He, K., Fan, H., Wu, Y., Xie, S., and Girshick, R. (2020).
\newblock Momentum contrast for unsupervised visual representation learning.
\newblock In {\em Proceedings of the IEEE/CVF conference on computer vision and pattern recognition}, pages 9729--9738.

\bibitem[Huang et~al., 2023]{huang2023towards}
Huang, W., Yi, M., Zhao, X., and Jiang, Z. (2023).
\newblock Towards the generalization of contrastive self-supervised learning.

\bibitem[Hyvarinen et~al., 2019]{hyvarinen2019nonlinear}
Hyvarinen, A., Sasaki, H., and Turner, R. (2019).
\newblock Nonlinear ica using auxiliary variables and generalized contrastive learning.
\newblock In {\em The 22nd International Conference on Artificial Intelligence and Statistics}, pages 859--868. PMLR.

\bibitem[Jaiswal et~al., 2020]{jaiswal2020survey}
Jaiswal, A., Babu, A.~R., Zadeh, M.~Z., Banerjee, D., and Makedon, F. (2020).
\newblock A survey on contrastive self-supervised learning.
\newblock {\em Technologies}, 9(1):2.

\bibitem[Jia et~al., 2021]{jia2021scaling}
Jia, C., Yang, Y., Xia, Y., Chen, Y.-T., Parekh, Z., Pham, H., Le, Q., Sung, Y.-H., Li, Z., and Duerig, T. (2021).
\newblock Scaling up visual and vision-language representation learning with noisy text supervision.
\newblock In {\em International conference on machine learning}, pages 4904--4916. PMLR.

\bibitem[Kukleva et~al., 2023]{kukleva2023temperature}
Kukleva, A., B{\"o}hle, M., Schiele, B., Kuehne, H., and Rupprecht, C. (2023).
\newblock Temperature schedules for self-supervised contrastive methods on long-tail data.
\newblock {\em arXiv preprint arXiv:2303.13664}.

\bibitem[Li et~al., 2021]{li2021supervision}
Li, Y., Liang, F., Zhao, L., Cui, Y., Ouyang, W., Shao, J., Yu, F., and Yan, J. (2021).
\newblock Supervision exists everywhere: A data efficient contrastive language-image pre-training paradigm.
\newblock {\em arXiv preprint arXiv:2110.05208}.

\bibitem[Liu et~al., 2021]{liu2021self}
Liu, X., Zhang, F., Hou, Z., Mian, L., Wang, Z., Zhang, J., and Tang, J. (2021).
\newblock Self-supervised learning: Generative or contrastive.
\newblock {\em IEEE transactions on knowledge and data engineering}, 35(1):857--876.

\bibitem[Lu and Steinerberger, 2022]{lu2022neural}
Lu, J. and Steinerberger, S. (2022).
\newblock Neural collapse under cross-entropy loss.
\newblock {\em Applied and Computational Harmonic Analysis}, 59:224--241.

\bibitem[Ma et~al., 2020]{ma2020active}
Ma, S., Zeng, Z., McDuff, D., and Song, Y. (2020).
\newblock Active contrastive learning of audio-visual video representations.
\newblock {\em arXiv preprint arXiv:2009.09805}.

\bibitem[Mu et~al., 2022]{mu2022slip}
Mu, N., Kirillov, A., Wagner, D., and Xie, S. (2022).
\newblock Slip: Self-supervision meets language-image pre-training.
\newblock In {\em European Conference on Computer Vision}, pages 529--544. Springer.

\bibitem[Oh~Song et~al., 2016]{oh2016deep}
Oh~Song, H., Xiang, Y., Jegelka, S., and Savarese, S. (2016).
\newblock Deep metric learning via lifted structured feature embedding.
\newblock In {\em Proceedings of the IEEE conference on computer vision and pattern recognition}, pages 4004--4012.

\bibitem[O'Neill, 2021]{o2021double}
O'Neill, B. (2021).
\newblock The double-constant matrix, centering matrix and equicorrelation matrix: Theory and applications.
\newblock {\em arXiv preprint arXiv:2109.05814}.

\bibitem[Oord et~al., 2018]{oord2018representation}
Oord, A. v.~d., Li, Y., and Vinyals, O. (2018).
\newblock Representation learning with contrastive predictive coding.
\newblock {\em arXiv preprint arXiv:1807.03748}.

\bibitem[Radford et~al., 2021]{radford2021clip}
Radford, A., Kim, J.~W., Hallacy, C., Ramesh, A., Goh, G., Agarwal, S., Sastry, G., Askell, A., Mishkin, P., Clark, J., et~al. (2021).
\newblock Learning transferable visual models from natural language supervision.
\newblock In {\em International conference on machine learning}, pages 8748--8763. PMLR.

\bibitem[Saunshi et~al., 2022]{saunshi2022understanding}
Saunshi, N., Ash, J., Goel, S., Misra, D., Zhang, C., Arora, S., Kakade, S., and Krishnamurthy, A. (2022).
\newblock Understanding contrastive learning requires incorporating inductive biases.
\newblock In {\em International Conference on Machine Learning}, pages 19250--19286. PMLR.

\bibitem[Saunshi et~al., 2019]{saunshi2019theoretical}
Saunshi, N., Plevrakis, O., Arora, S., Khodak, M., and Khandeparkar, H. (2019).
\newblock A theoretical analysis of contrastive unsupervised representation learning.
\newblock In {\em International Conference on Machine Learning}, pages 5628--5637. PMLR.

\bibitem[Schroff et~al., 2015]{schroff2015facenet}
Schroff, F., Kalenichenko, D., and Philbin, J. (2015).
\newblock Facenet: A unified embedding for face recognition and clustering.
\newblock In {\em Proceedings of the IEEE conference on computer vision and pattern recognition}, pages 815--823.

\bibitem[Sohn, 2016]{sohn2016improved}
Sohn, K. (2016).
\newblock Improved deep metric learning with multi-class n-pair loss objective.
\newblock {\em Advances in neural information processing systems}, 29.

\bibitem[Sreenivasan et~al., 2023]{sreenivasan2023mini}
Sreenivasan, K., Lee, K., Lee, J.-G., Lee, A., Cho, J., Sohn, J.-y., Papailiopoulos, D., and Lee, K. (2023).
\newblock Mini-batch optimization of contrastive loss.
\newblock In {\em ICLR 2023 Workshop on Mathematical and Empirical Understanding of Foundation Models}.

\bibitem[Strohmer and Heath~Jr, 2003]{strohmer2003grassmannian}
Strohmer, T. and Heath~Jr, R.~W. (2003).
\newblock Grassmannian frames with applications to coding and communication.
\newblock {\em Applied and computational harmonic analysis}, 14(3):257--275.

\bibitem[Sustik et~al., 2007]{sustik2007existence}
Sustik, M.~A., Tropp, J.~A., Dhillon, I.~S., and Heath~Jr, R.~W. (2007).
\newblock On the existence of equiangular tight frames.
\newblock {\em Linear Algebra and its applications}, 426(2-3):619--635.

\bibitem[Tian et~al., 2020]{tian2020makes}
Tian, Y., Sun, C., Poole, B., Krishnan, D., Schmid, C., and Isola, P. (2020).
\newblock What makes for good views for contrastive learning?
\newblock {\em Advances in neural information processing systems}, 33:6827--6839.

\bibitem[Wang and Liu, 2021]{wang2021understanding}
Wang, F. and Liu, H. (2021).
\newblock Understanding the behaviour of contrastive loss.
\newblock In {\em Proceedings of the IEEE/CVF conference on computer vision and pattern recognition}, pages 2495--2504.

\bibitem[Wang and Isola, 2020]{wang2020understanding}
Wang, T. and Isola, P. (2020).
\newblock Understanding contrastive representation learning through alignment and uniformity on the hypersphere.
\newblock In {\em International Conference on Machine Learning}, pages 9929--9939. PMLR.

\bibitem[Wu et~al., 2020]{wu2020clear}
Wu, Z., Wang, S., Gu, J., Khabsa, M., Sun, F., and Ma, H. (2020).
\newblock Clear: Contrastive learning for sentence representation.
\newblock {\em arXiv preprint arXiv:2012.15466}.

\bibitem[Yao et~al., 2021]{yao2021filip}
Yao, L., Huang, R., Hou, L., Lu, G., Niu, M., Xu, H., Liang, X., Li, Z., Jiang, X., and Xu, C. (2021).
\newblock Filip: Fine-grained interactive language-image pre-training.
\newblock {\em arXiv preprint arXiv:2111.07783}.

\bibitem[Zhai et~al., 2023]{zhai2023sigmoid}
Zhai, X., Mustafa, B., Kolesnikov, A., and Beyer, L. (2023).
\newblock Sigmoid loss for language image pre-training.
\newblock {\em arXiv preprint arXiv:2303.15343}.

\bibitem[Zhang et~al., 2022a]{zhang2022dual}
Zhang, C., Zhang, K., Pham, T.~X., Niu, A., Qiao, Z., Yoo, C.~D., and Kweon, I.~S. (2022a).
\newblock Dual temperature helps contrastive learning without many negative samples: Towards understanding and simplifying moco.
\newblock In {\em Proceedings of the IEEE/CVF Conference on Computer Vision and Pattern Recognition}, pages 14441--14450.

\bibitem[Zhang et~al., 2022b]{zhang2022does}
Zhang, C., Zhang, K., Zhang, C., Pham, T.~X., Yoo, C.~D., and Kweon, I.~S. (2022b).
\newblock How does simsiam avoid collapse without negative samples? a unified understanding with self-supervised contrastive learning.
\newblock {\em arXiv preprint arXiv:2203.16262}.

\bibitem[Zhang et~al., 2021]{zhang2021temperature}
Zhang, O., Wu, M., Bayrooti, J., and Goodman, N. (2021).
\newblock Temperature as uncertainty in contrastive learning.
\newblock {\em arXiv preprint arXiv:2110.04403}.

\end{thebibliography}

\clearpage
\onecolumn

\appendix

\newtheorem*{theorem*}{Theorem}
\newtheorem*{lemma*}{Lemma}
\newtheorem*{corollary*}{Corollary}

\section{Comparison of Various Contrastive Loss Functions}\label{sec:compare:loss}

As metioned in Sec.~\ref{sec:intro}, there have been various approaches of defining contrastive loss function in the past year. In this section, we compare and highlight the distinctive points of the three loss functions relevant to our paper; NCE loss $\Ls^{\op{NCE}}$~\citep{gutmann2010noise, gutmann2012noise},  InfoNCE loss $\Ls^{\op{InfoNCE}}$~\citep{oord2018representation, radford2021clip}, and Sigmoid loss $\Ls^{\op{sig}}$~\citep{zhai2023sigmoid}.

To simplify the notation, we omit the instance pair index $i$, the temperature parameter, and the bias term. 
For a given embedding $\vu$, we consider the 
the positive embedding $\vv_+$ and the negative embeddings $\{\vv_-^{(j)}\}_{j=1}^N$ with respect to $\vu$. If there is only one negative embedding, we remove the index $(j)$ and denote it as $\vv_-$. 
Then, the three loss functions are defined as follows.
\begin{align}
    \Ls^{\op{NCE}}(\vu, \mV)
    &= - 
    \log\left( \frac{\exp (\vu^\top \vv_+)}{\exp (\vu^\top \vv_+)+\exp (\vu^\top \vv_-)} \right)
    \\ &= 
    - \log\left( \frac{1}{1+\exp \left( - \vu^\top \vv_+ + \vu^\top \vv_-\right)} \right)
    \\
    \Ls^{\op{InfoNCE}}(\vu, \mV)
    &= - 
    \log\left( \frac{\exp (\vu^\top \vv_+)}{\exp (\vu^\top \vv_+)+ \sum_{j}\exp \left(\vu^\top \vv_-^{(j)}\right)} \right)
    \\ &= - 
    \log\left( \frac{1}{1+ \sum_{j}\exp \left( -\vu^\top \vv_+ + \vu^\top \vv_-^{(j)}\right)} \right)
    \\
    \Ls^{\op{sig}}(\vu, \mV)
    &= 
    - \log\left( \frac{1}{1+\exp \left(- \vu^\top \vv_+\right)} \right)
    - \sum_{j} \log\left( \frac{1}{1+\exp \left(\vu^\top \vv_-^{(j)} \right)} \right)
\end{align}
There are two points to note. First, $\Ls^{\op{NCE}}$ has only one negative pair, whereas $\Ls^{\op{InfoNCE}}$, an extension of $\Ls^{\op{NCE}}$, includes multiple negative pairs, similar to $\Ls^{\op{sig}}$. Second, the NCE-based loss functions, $\Ls^{\op{NCE}}$ and $\Ls^{\op{InfoNCE}}$, are derived from the softmax function, while $\Ls^{\op{sig}}$ is formulated based on the sigmoid function. As a result, $\Ls^{\op{sig}}$ can separate the term for the positive pair and negative pairs through summation, whereas the NCE-based losses cannot.

\section{Proofs}\label{sec:proof}

Note that the $(N-1)$-regular-simplex is a set of $N$ vectors which makes the geometry of simplex with equal edges. Detailed concepts and properties are on \cite{el2004certain, gale1953inscribing, fickus2018equiangular, sustik2007existence}.

\subsection{Proof of Lemma~\ref{thm:model}}\label{appendix:first}

\begin{lemma*}
    Suppose $d\geq N$.
    Let $\{\vu_i, \vv_i \}_{i=1}^N $ be $2N$ distinct vectors in $\sR^d$, satisfying $\vu_i^\top\vu_i=\vv_i^\top\vv_i=1$ for all $i\in [N]$.
    Then, for every $c  \in[-1,1]$, 
    there exists unique $\delta \ge 0$ such that 
    \begin{align}\label{eqn:opt_embed_constrained}
     \left(\mU(\delta), \mV(\delta)\right) =
     \argmin_{\left\{ (\mU, \mV) : \frac{1}{N} \sum_{i=1}^N \vu_i^\top \vv_i=c \right\}} \sum_{i=1}^N\sum_{j\in [N] \setminus \{i\}} \vu_i^\top \vv_j.
    \end{align}
\end{lemma*}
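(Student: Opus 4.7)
The plan is to reformulate the constrained minimization as a semidefinite program (SDP) on the Gram matrix of the $2N$ embedding vectors, reduce it to a two-parameter family via symmetry, solve it by eigenvalue analysis, and then verify that the CCEM attains the resulting lower bound.

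Introducing $\vA = \sum_i \vu_i$ and $\vB = \sum_i \vv_i$, the identity
\begin{align}
\sum_{i=1}^N \sum_{j \in [N]\setminus\{i\}} \vu_i^\top \vv_j = \vA^\top \vB - Nc
\end{align}
shows the objective is affine in $\vA^\top \vB$. Packaging all inner products into the $2N \times 2N$ Gram matrix $\mG$ of $(\vu_1, \dots, \vu_N, \vv_1, \dots, \vv_N)$ turns the problem into a linear minimization over $\{\mG : \mG \succeq 0,\; G_{ii} = 1,\; \sum_i G_{i, N+i} = Nc,\; \mathrm{rank}(\mG) \leq d\}$. Dropping the rank constraint yields a convex SDP whose optimum lower-bounds the original minimum.

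The SDP is invariant under the group $S_N \times \mathbb{Z}_2$ that simultaneously permutes pair indices and swaps $\vu \leftrightarrow \vv$. Averaging any feasible $\mG$ over this group preserves PSD-ness (convexity of the PSD cone), the linear constraints, and the objective value, so the averaged matrix is still optimal. It has the symmetric block form
\begin{align}
\mG' = \begin{pmatrix} A & C \\ C & A \end{pmatrix}, \quad A = (1-a)\mI + a \mathbf{1}\mathbf{1}^\top, \quad C = (c - c_2)\mI + c_2 \mathbf{1}\mathbf{1}^\top,
\end{align}
parameterized by scalars $a, c_2$. Since $A$ and $C$ commute, the spectrum of $\mG'$ is the union of the spectra of $A \pm C$, yielding four eigenvalues with multiplicities $(1, N-1, 1, N-1)$. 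Requiring all four to be non-negative is four linear inequalities in $(a, c_2)$, and minimizing $c_2$ over this polytope is a two-variable linear program whose optimum is
\begin{align}
c_2^{\star} = \frac{-N + (N-2)c}{2(N-1)}, \quad a^{\star} = \frac{N-2-Nc}{2(N-1)},
\end{align}
attained at the intersection of the two constraints $1 + (N-1)a + c + (N-1)c_2 = 0$ and $1 - a - c + c_2 = 0$.

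Finally, for $c \in (-1, 1]$ I would choose $\delta = \sqrt{(1-c)/(1+c)}$, the unique non-negative root of $\frac{1-\delta^2}{1+\delta^2} = c$. By Prop.~\ref{prop:model} the CCEM $(\mU(\delta), \mV(\delta))$ has $\vu_i^\top \vv_i = c$ and $\vu_i^\top \vv_j = c_2^{\star}$ for $i \neq j$; a direct calculation using $\vw_i^\top \vw_j = -\frac{1}{N-1}$ also yields $\vu_i^\top \vu_j = \vv_i^\top \vv_j = a^{\star}$, so the CCEM realizes exactly the symmetric Gram matrix above. Counting its non-zero eigenvalues gives $\mathrm{rank}(\mG) = N \leq d$, so CCEM is feasible for the original (rank-constrained) problem and attains the SDP lower bound, proving optimality. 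Uniqueness of $\delta$ follows from the strict monotonicity of $\delta \mapsto \frac{1-\delta^2}{1+\delta^2}$ from $[0, \infty)$ onto $(-1, 1]$; the endpoint $c = -1$ is recovered as the antipodal limit $\delta \to \infty$. The main delicate point is that group averaging can inflate rank, so $\mG'$ may not itself satisfy the rank constraint; this is circumvented by using symmetrization only to derive the SDP lower bound and then independently exhibiting CCEM as a rank-$N$ witness matching it.
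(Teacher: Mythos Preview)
Your proposal is correct and takes a genuinely different route from the paper.

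The paper's argument is entirely elementary: it applies Jensen's inequality to the vectors $\vu_i-\vv_i$ to get
\[
\sum_i \|\vu_i-\vv_i\|^2 \;\ge\; N\Bigl\|\tfrac{1}{N}\sum_i(\vu_i-\vv_i)\Bigr\|^2
= N\Bigl\|\tfrac{1}{N}\sum_i(\vu_i+\vv_i)\Bigr\|^2 - 4N\,\bar\vu^\top\bar\vv
\;\ge\; -4N\,\bar\vu^\top\bar\vv,
\]
and then rewrites $\sum_{i\neq j}\vu_i^\top\vv_j$ in terms of $\bar\vu^\top\bar\vv$ and the (constrained) diagonal sum. The two equality conditions---all $\vu_i-\vv_i$ equal and $\bar\vu+\bar\vv=0$---immediately suggest the CCEM construction, and one checks CCEM attains the bound. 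The whole thing is about ten lines.

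Your SDP/symmetrization approach arrives at the identical lower bound $c_2^\star=\frac{-N+(N-2)c}{2(N-1)}$ via heavier but more systematic machinery. What it buys: the role of the assumption $d\ge N$ becomes completely explicit (it is exactly the rank of the optimal Gram matrix, since constraints~1 and~4 are tight), you obtain the intra-view inner products $a^\star$ for free, and the method generalizes mechanically to other linear objectives or constraints on the Gram matrix. What the paper's approach buys: it is shorter, uses nothing beyond Jensen and norm non-negativity, and its equality conditions transparently \emph{motivate} the form of CCEM rather than merely verifying it post hoc. Your handling of the rank issue (symmetrize only to get the relaxed lower bound, then exhibit CCEM as a rank-$N$ witness) is the right way to close that gap.
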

\begin{proof}
    By using Jensen’s inequality,
    \begin{align}
        \sum_{i=1}^N \| \vu_i - \vv_i \|_2^2
        &\geq
        N \left\| \frac{1}{N} \sum_{i=1}^N (\vu_i - \vv_i) \right\|_2^2
        \\
        &=
        N\left\| \frac{1}{N}\sum_{i=1}^N (\vu_i + \vv_i) \right\|_2^2
        - 4N\left(\frac{1}{N}\sum_{i=1}^N \vu_i\right)^\top \left(\frac{1}{N}\sum_{i=1}^N\vv_i\right)
        \\
        &\geq
        - 4N\left(\frac{1}{N}\sum_{i=1}^N \vu_i\right)^\top \left(\frac{1}{N}\sum_{i=1}^N\vv_i\right)
    \end{align}
    where the equality conditions are 
    \begin{align}\label{eqn:eq_condition}
    \begin{cases}
        \vu_i -\vv_i =\vc' \text{ for all } i \in [N], \text{ for some constant vector } \vc' \\
        \frac{1}{N}\sum_{i=1}^N \vu_i + \frac{1}{N}\sum_{i=1}^N \vv_i=0 &.
    \end{cases}        
    \end{align}
    This implies that 
    \begin{equation}
        \label{eqn:inequality of inner product}
        \left(\frac{1}{N}\sum_{i=1}^N \vu_i\right)^\top \left(\frac{1}{N}\sum_{i=1}^N\vv_i\right) \geq - \frac{1}{4N} \sum_{i=1}^N \| \vu_i - \vv_i \|_2^2.
    \end{equation}

    Note that the inner product of centroids is
    \begin{align}
        \left(\frac{1}{N}\sum_{i=1}^N \vu_i\right)^\top \left(\frac{1}{N}\sum_{j=1}^N \vv_j\right)
        &=
        \frac{1}{N^2}\sum_{i=1}^N \vu_i^\top \vv_i
        + \frac{1}{N^2}\sum_{i=1}^N \sum_{j\in [N] \setminus \{i\}} \vu_i^\top \vv_j.
    \end{align}

    Since $\| \vu_i - \vv_i \|_2^2=2-2\vu_i^\top \vv_i$ for all $i\in [N]$, we have
    \begin{align}
        \frac{1}{N^2}\sum_{i=1}^N \sum_{j\in [N] \setminus \{i\}} \vu_i^\top \vv_j
        &=
        \left(\frac{1}{N}\sum_{i=1}^N \vu_i\right)^\top \left(\frac{1}{N}\sum_{j=1}^N \vv_j\right)
        - \frac{1}{N^2}\sum_{i=1}^N \vu_i^\top \vv_i
        \\
        &=
        \left(\frac{1}{N}\sum_{i=1}^N \vu_i\right)^\top \left(\frac{1}{N}\sum_{j=1}^N \vv_j\right)
        + \frac{1}{2N^2}\sum_{i=1}^N  \| \vu_i - \vv_i \|_2^2
        - \frac{1}{N}
        \\
        &\geq
        - \frac{1}{4N} \sum_{i=1}^N \| \vu_i - \vv_i \|_2^2
        + \frac{1}{2N^2}\sum_{i=1}^N  \| \vu_i - \vv_i \|_2^2
        - \frac{1}{N}
        \label{eqn:equality_condition}\\
        &=
        \frac{N-2}{2N^2}\sum_{i=1}^N  \vu_i^\top\vv_i
        - \frac{1}{2}
        \label{eqn:minimum:lemma1}
    \end{align}
    where the inequality is from Eq.~\ref{eqn:inequality of inner product}.

    Recall that our target problem in Eq.~\ref{eqn:opt_embed_constrained} is about finding the optimal embedding matrices $(\mU, \mV)$ that minimizes $\sum_{i=1}^N \sum_{j\in [N] \setminus \{i\}} \vu_i^\top \vv_j$, under the constraint of $\frac{1}{N} \sum_{i=1}^N \vu_i^\top \vv_i=c$. One can confirm that the necessary and sufficient condition for achieving this optimal $(\mU, \mV)$ is nothing but 
    the equality condition for Eq.~\ref{eqn:equality_condition}, which is in Eq.~\ref{eqn:eq_condition}.
    
    Note that the construction of the \ours{} in Def.~\ref{def:model} fulfills these equality conditions of Eq.~\ref{eqn:eq_condition}, which also attains the minimum of Eq.~\ref{eqn:minimum:lemma1}, since
        \begin{align}
        \frac{1}{N^2}\sum_{i=1}^N \sum_{j\in [N] \setminus \{i\}} \left(\vu_i^{\delta}\right)^\top \vv_j^{\delta}
        &=
        - \frac{\frac{1}{N} + \frac{N-1}{N} \delta^2}{ 1+\delta^2}
        \\
        &=
        \frac{N-2}{2N^2}\sum_{i=1}^N  \left(\vu_i^{\delta}\right)^\top \vv_i^{\delta}
        - \frac{1}{2}.
    \end{align}

    holds from Prop.~\ref{prop:model}.

    Thus, we can conclude that there exists unique $\delta$ such that $(\mU(\delta), \mV(\delta))$ is the optimal embedding. Actually, we can specify such $\delta$: from Eq.~\ref{eqn:double-const-DEM}, $\delta=\sqrt{\frac{1-c}{1+c}}$ holds when $\frac{1}{N}\sum_{i=1}^N \vu_i^\top \vv_i  = c$. The uniqueness of $\delta$ arises from the fact that $\sqrt{\frac{1-c}{1+c}}$ is a strictly decreasing function with respect to $c\in[-1,1]$.

\end{proof}

\subsection{Proof of Corollary~\ref{cor:DEM infoNCE}}\label{appendix:ccem-is-sufficeint-for-infonce}

\begin{corollary*}
Consider InfoNCE loss $\Ls^{\op{InfoNCE}}(\mU, \mV)$ defined in Eq.~\ref{loss:info}.
Then,
\begin{align}
\min_{\mU, \mV} \Ls^{\op{InfoNCE}}(\mU, \mV) = \min_{\delta \geq 0 } \Ls^{\op{InfoNCE}}(\mU(\delta), \mV(\delta)).
\end{align}
\end{corollary*}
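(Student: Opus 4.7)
The plan is to mirror the structure of the proof of Theorem~\ref{thm:DEM is sufficient}: first, reduce $\Ls^{\op{InfoNCE}}$ to a quantity that depends only on the two scalar summaries $B := \sum_{i} \vu_i^\top \vv_i$ and $A := \sum_{i\neq j}\vu_i^\top \vv_j$ of the similarity matrix, and then invoke Lemma~\ref{thm:model} to conclude that the minimizer lies in \ours{}. The technical difficulty, compared with Theorem~\ref{thm:DEM is sufficient}, is that the log-sum-exp terms couple each diagonal similarity $s_{ii}:=\vu_i^\top \vv_i$ with all off-diagonal entries in the same row (and column), so the separable splitting of Eq.~\ref{eqn:loss_symm} into convex-decreasing $l_{\op{p}}$ plus convex-increasing $l_{\op{n}}$ is not directly available; instead, Jensen's inequality must be applied in two nested stages.

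\textbf{Key steps.} First, I would rewrite each InfoNCE term as $\frac{1}{N}\sum_i \log(1 + \sum_{j\neq i} e^{(s_{ij}-s_{ii})/t})$ together with its column-wise analogue. Convexity of the exponential then yields $\sum_{j\neq i} e^{(s_{ij}-s_{ii})/t} \geq (N-1)\exp(\frac{q_i - (N-1)s_{ii}}{(N-1)t})$ where $q_i := \sum_{j\neq i} s_{ij}$, and introducing the convex increasing auxiliary function $g(x) := \log(1+(N-1)e^x)$ converts the lower bound to $\frac{1}{N}\sum_i g(\frac{q_i-(N-1)s_{ii}}{(N-1)t})$ plus its symmetric counterpart for $q'_i := \sum_{j\neq i} s_{ji}$. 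A second application of Jensen to the outer averages (using convexity of $g$), together with $\sum_i q_i = \sum_i q'_i = A$ and $\sum_i s_{ii} = B$, collapses the loss to the scalar lower bound
\begin{align}
\Ls^{\op{InfoNCE}}(\mU,\mV) \;\geq\; 2\,g\!\left(\frac{A - (N-1)B}{N(N-1)t}\right).
\end{align}
The equality conditions pooled from both Jensen steps are exactly the double-constant conditions $s_{ii}=c_1$ for all $i$ and $s_{ij}=c_2$ for all $i\neq j$, which by Prop.~\ref{prop:model} hold identically on every \ours{}. Since $g$ is strictly increasing, for each fixed $B$ the bound is minimized by minimizing $A$, and Lemma~\ref{thm:model} shows that this minimization is uniquely attained inside \ours{}. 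Minimizing the resulting one-dimensional expression over $\delta\geq 0$ then yields the claimed identity $\min_{\mU,\mV}\Ls^{\op{InfoNCE}}(\mU,\mV)=\min_{\delta\geq 0}\Ls^{\op{InfoNCE}}(\mU(\delta),\mV(\delta))$.

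\textbf{Main obstacle.} I expect the subtle point to be the \emph{choice} of Jensen step at the inner level. The argument $\frac{q_i-(N-1)s_{ii}}{(N-1)t}$ of the auxiliary function $g$ is precisely the quantity that the double-constant property of \ours{} forces to be constant across $i$, so both stages of Jensen become equalities on \ours{} simultaneously; any alternative splitting (for example, bounding each $s_{ij}$ individually without first isolating $s_{ii}$) would fail to make the equality conditions compatible with Lemma~\ref{thm:model}. A minor but necessary verification is that the row-sum and column-sum totals coincide ($\sum_i q_i = \sum_i q'_i$), so that the two Jensen bounds combine into a single monotone function of the single scalar $A - (N-1)B$, on which Lemma~\ref{thm:model} can then be applied.
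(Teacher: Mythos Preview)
Your proposal is correct and follows essentially the same route as the paper's proof: rewrite each InfoNCE summand via the differences $s_{ij}-s_{ii}$, apply Jensen to the inner exponentials and then to the outer convex function $g(x)=\log(1+(N-1)e^{x})$ (the paper's $l$), obtain the scalar lower bound $2g\!\big((A-(N-1)B)/(N(N-1)t)\big)$, verify that the pooled equality conditions are precisely the double-constant conditions, and finish with Lemma~\ref{thm:model}. The only differences are notational (your $g,A,B,q_i$ versus the paper's $l,h$ and explicit sums), and your statement of the equality conditions as $s_{ii}=c_1$, $s_{ij}=c_2$ is equivalent to the paper's phrasing $\vu_i^\top(\vv_j-\vv_i)=c_1$, $(\vu_j-\vu_i)^\top\vv_i=c_2$.
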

\begin{proof}
    Define the function $h$ of $c\in[-1,1]$ as
     \begin{align}
     h(c) :=
     \min_{\left\{ (\mU, \mV) : \frac{1}{N} \sum_{i=1}^N \vu_i^\top \vv_i=c \right\}}\sum_{i=1}^N \sum_{j\in [N] \setminus \{i\}} \vu_i^\top \vv_j.
    \end{align}

    To simplify the notation, define the convex function
     \begin{align}
    l(x) = \log\left(1+ (N-1) \exp \left(\frac{1}{tN(N-1)} x\right)\right)
    \end{align}

    for any $N\geq 3$ and $t>0$.

    By using Jensen's inequality on the loss $\Ls^{\op{InfoNCE}}$ in Eq.~\ref{loss:info},
    \begin{align}
       \Ls^{\op{InfoNCE}}
        &= \frac{1}{N} \sum_{i=1}^N \log \left( 1+ \sum_{j\in [N] \setminus \{i\}} \exp \left(\vu_i^\top (\vv_j-\vv_i) /t\right)\right)
        + \frac{1}{N} \sum_{i=1}^N \log \left( 1+ \sum_{j\in [N] \setminus \{i\}} \exp \left((\vu_j-\vu_i)^\top \vv_i /t\right)\right)
        \\
        &\geq
        \frac{1}{N} \sum_{i=1}^N \log \left( 1+ (N-1)\exp \left( \frac{1}{N-1}\sum_{j\in [N] \setminus \{i\}}\vu_i^\top (\vv_j-\vv_i) /t\right)\right)
        \\
        & \quad + \frac{1}{N} \sum_{i=1}^N \log \left( 1+ (N-1)\exp \left( \frac{1}{N-1} \sum_{j\in [N] \setminus \{i\}} (\vu_j-\vu_i)^\top \vv_i /t\right)\right)
        \\
        &\geq
        \log \left( 1+ (N-1)\exp \left( \frac{1}{N(N-1)}\sum_{i=1}^N\sum_{j\in [N] \setminus \{i\}}\vu_i^\top (\vv_j-\vv_i) /t\right)\right)
        \\
        & \quad +\log \left( 1+ (N-1)\exp \left(  \frac{1}{N(N-1)}\sum_{i=1}^N \sum_{j\in [N] \setminus \{i\}} (\vu_j-\vu_i)^\top \vv_i /t\right)\right)
        \\
        &=
        2l\left(\sum_{i=1}^N\sum_{j\in [N] \setminus \{i\}} \vu_i^\top \vv_j - (N-1)\sum_{i=1}^N \vu_i^\top\vv_i \right)
        \\
        & \geq
        2l\left( h \left( \frac{1}{N} \sum_{i=1}^N \vu_i^\top \vv_i \right) - (N-1)\sum_{i=1}^N \vu_i^\top\vv_i \right)
    \end{align}
    where the equality conditions of Jensen's inequalities are $\vu_i^\top (\vv_j-\vv_i) = c_1$ and $(\vu_j-\vu_i)^\top \vv_i = c_2$ for all $i, j \in [N]$ with $i\neq j$. The last inequality holds since $l$ is an increasing function.

For every $c \in[-1,1]$, by Lemma~\ref{thm:model}, there is a unique $\delta(c) \geq0$ 
    such that
    \begin{align}
     (\mU(\delta), \mV(\delta)) :=
     \argmin_{\left\{ (\mU, \mV) : \frac{1}{N} \sum_{i=1}^N \vu_i^\top \vv_i=c \right\}} \sum_{i=1}^N \sum_{j\in [N] \setminus \{i\}} \vu_i^\top \vv_j
    \end{align}

which implies 
    \begin{align}
    \sum_{i=1}^N \sum_{j\in [N] \setminus \{i\}}\left(\vu_i^\delta\right)^\top\vv_j^\delta  = h \left( \frac{1}{N} \sum_{i=1}^N \left(\vu_i^\delta\right)^\top \vv_i^\delta \right)
    \end{align}
    Moreover, every \ours{} fulfills the equality conditions of Jensen's inequalities, due to its double-constant properties in Eq.~\ref{eqn:double-const-DEM}. Thus,
    \begin{align}
        \Ls^{\op{InfoNCE}} (\mU(\delta), \mV(\delta))
        = 
         2l\left( h \left( \frac{1}{N} \sum_{i=1}^N \left(\vu_i^\delta\right)^\top \vv_i^\delta \right) - (N-1)\sum_{i=1}^N \left(\vu_i^\delta\right)^\top\vv_i^\delta \right)
    \end{align}

    Since $\delta$ is function of $\frac{1}{N} \sum_{i=1}^N \vu_i^\top \vv_i$, we have
    \begin{align}
        \Ls^{\op{InfoNCE}}
        &\geq 2l\left( h \left( \frac{1}{N} \sum_{i=1}^N \vu_i^\top \vv_i \right) - (N-1)\sum_{i=1}^N \vu_i^\top\vv_i \right)
        \\
         &\geq 2 \min_{\frac{1}{N} \sum_{i=1}^N \vu_i^\top \vv_i }
         l\left( h \left( \frac{1}{N} \sum_{i=1}^N \vu_i^\top \vv_i \right) - (N-1)\sum_{i=1}^N \vu_i^\top\vv_i \right)
         \\
        & = 2\min_{\delta \geq 0} 
        l\left( h \left( \frac{1}{N} \sum_{i=1}^N \left(\vu_i^\delta\right)^\top \vv_i^\delta \right) - (N-1)\sum_{i=1}^N \left(\vu_i^\delta\right)^\top\vv_i^\delta \right)
        \\
        &= \min_{\delta \geq 0} \Ls^{\op{InfoNCE}}(\mU(\delta), \mV(\delta)) .
    \end{align}

    Thus, the optimal embeddings ($\mU(\delta^{\star}), \mV(\delta^{\star})$) found from \ours{} minimize the contrastive loss over all possible embeddings. 
\end{proof}

\subsection{Proof of Corollary~\ref{thm:lu}, using the proposed \ours{}}\label{appendix:optimal-infonce}

Here we show that Corollary~\ref{thm:lu} proven by~\cite{lu2022neural}, can be also proved using the \ours{} (Double-constant Embedding Model) framework proposed by us in Def.~\ref{def:model}.


\begin{corollary*}[Equivalent to Theorem 1 of \cite{lu2022neural}]
    Let $\mU, \mV$ be the optimal embedding minimizing the InfoNCE loss $\Ls^{\op{InfoNCE}}$.
    Then $\mU = \mV$ holds and $\mU$ forms a simplex ETF, regardless of the temperature parameter $t > 0$ and the number of paired instances $N\geq 3$.
\end{corollary*}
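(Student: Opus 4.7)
The plan is to invoke Corollary~\ref{cor:DEM infoNCE}, which reduces the infinite-dimensional optimization over $(\mU, \mV)$ to a one-dimensional optimization over the scalar $\delta \geq 0$ parametrizing the \ours{}. So it suffices to determine $\delta^\star := \argmin_{\delta\geq 0} \Ls^{\op{InfoNCE}}(\mU(\delta), \mV(\delta))$ and show that $\delta^\star = 0$, since by Prop.~\ref{prop:model} the value $\delta = 0$ corresponds precisely to $\mU = \mV$ with columns forming an $(N-1)$-simplex ETF.

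First I would use the double-constant identities of Eq.~\ref{eqn:double-const-DEM} from Prop.~\ref{prop:model} to write, for any $\delta \ge 0$,
\begin{align}
(\vu_i^\delta)^\top \vv_i^\delta &= a(\delta) := \frac{1-\delta^2}{1+\delta^2},\\
(\vu_i^\delta)^\top \vv_j^\delta &= b(\delta) := -\frac{\tfrac{1}{N-1}+\delta^2}{1+\delta^2}, \quad i \neq j.
\end{align}
A key observation is that within the \ours{}, the similarity matrix is symmetric ($(\vu_i^\delta)^\top \vv_j^\delta = (\vu_j^\delta)^\top \vv_i^\delta$), so the two summations in Eq.~\ref{loss:info} are equal. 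Substituting the constants above into $\Ls^{\op{InfoNCE}}$ collapses the loss to the one-variable expression
\begin{align}
\Ls^{\op{InfoNCE}}(\mU(\delta), \mV(\delta)) = 2\log\!\left(1 + (N-1)\exp\!\left(\tfrac{b(\delta)-a(\delta)}{t}\right)\right).
\end{align}

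Next, I would compute the gap $b(\delta) - a(\delta) = -\frac{N}{(N-1)(1+\delta^2)}$, which is strictly negative and strictly increasing in $\delta \ge 0$ (approaching $0$ as $\delta \to \infty$). Since $\log(1 + (N-1)e^{x/t})$ is strictly increasing in $x$, the composite loss $\Ls^{\op{InfoNCE}}(\mU(\delta), \mV(\delta))$ is strictly increasing in $\delta$ on $[0,\infty)$. Therefore $\delta^\star = 0$ is the unique minimizer on $\delta \geq 0$. By Prop.~\ref{prop:model}, this yields $\vu_i^\star = \vv_i^\star$ for all $i$ and $\{\vu_i^\star\}_{i=1}^N$ is the $(N-1)$-simplex ETF; crucially, the argument is independent of the choice of $t > 0$.

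The main obstacle is a bookkeeping one rather than a conceptual one: verifying the symmetry $(\vu_i^\delta)^\top \vv_j^\delta = (\vu_j^\delta)^\top \vv_i^\delta$ inside the \ours{} so that the two summands of Eq.~\ref{loss:info} genuinely coincide, and carrying out the algebraic simplification of $b(\delta)-a(\delta)$ cleanly. Once that reduction is in hand, the monotonicity in $\delta$ is immediate, and the conclusion $\mU^\star = \mV^\star$ being a simplex ETF follows directly from the $\delta = 0$ case of Prop.~\ref{prop:model}, completing the proof.
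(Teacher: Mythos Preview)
Your proposal is correct and essentially identical to the paper's proof: both invoke Corollary~\ref{cor:DEM infoNCE} to reduce to a one-dimensional problem, substitute the \ours{} inner products to obtain $\Ls^{\op{InfoNCE}}(\mU(\delta),\mV(\delta)) = 2\log\bigl(1+(N-1)\exp(-\tfrac{N}{(N-1)(1+\delta^2)t})\bigr)$, and observe strict monotonicity in $\delta$ to conclude $\delta^\star=0$. The symmetry concern you flag is a non-issue, since the double-constant property of Eq.~\ref{eqn:double-const-DEM} already gives $(\vu_i^\delta)^\top\vv_j^\delta = (\vu_j^\delta)^\top\vv_i^\delta = c_2$ for all $i\neq j$.
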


\begin{proof}
It is sufficient to use the \ours{} in Def.~\ref{def:model} to find the minimum by Corollary~\ref{cor:DEM infoNCE}. Then, by introducing the \ours{} to the InfoNCE loss  $\Ls^{\op{InfoNCE}}$ of Eq.~\ref{loss:info},

\begin{align}
\Ls^{\op{InfoNCE}}(\mU(\delta), \mV(\delta)) 
&= -\frac{1}{N} \sum_{i=1}^N \log \frac{\exp\left(\left(\vu_i^\delta\right)^\top \vv_i^\delta/t\right)}{\sum_{j=1}^N \exp \left(\left(\vu_i^\delta\right)^\top \vv_j^\delta/t\right)} 
- \frac{1}{N} \sum_{i=1}^N \log \frac{\exp\left(\left(\vu_i^\delta\right)^\top \vv_i^\delta/t\right)}{\sum_{j=1}^N \exp\left(\left(\vu_j^\delta\right)^\top \vv_i^\delta/t\right)}
\\
&= 2
\log \left(
    1
    + (N-1) \exp \left(-\frac{\frac{N}{N-1}}{1+\delta^2}/t \right)
\right)
\end{align}
which is strictly monotonic increasing function with respect to $\delta$ for all $t>0$ and $N \geq 3$.

Therefore, $\Ls^{\op{InfoNCE}}$ minimize at $\delta=0$ for all $t>0$ and $N \geq 3$.
\end{proof}

\subsection{Proof of Theorem~\ref{thm:logsig}}\label{appendix:temperature-siglip}
\begin{theorem*}[Optimal Embedding for sigmoid loss]
    Consider the case when $t=b$, \ie the temperature and the bias are identical.
    When $N=3$, we have $\delta^{\star} =  0$, \ie
    the optimal embedding for sigmoid loss forms a simplex ETF, regardless of $t$ value. On the other hand, when $N\geq 4$, the optimal $\delta^{\star}$ is a monotonic decreasing function of $t$, satisfying 
    \begin{align}
        \delta^{\star} = 
        \begin{cases}
            0, & \text{ if and only if } t >\frac{N-1}{N}\log\left(N-3\right) \\
            \infty, & \text{ if and only if } t<\frac{1}{2}\log\frac{N-2}{2}.
        \end{cases}
    \end{align}
\end{theorem*}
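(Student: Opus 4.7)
The plan is to leverage Corollary~\ref{thm:opt_is_symmetric} to restrict the search to the \ours{} family, turning the matrix-valued optimization into a one-dimensional minimization over $\delta \geq 0$. Substituting the inner products of Prop.~\ref{prop:model} into $\Ls^{\op{sig}}$ with $t=b$ and dividing by $N$ (which does not affect the minimizer) yields a single scalar objective
\begin{align}
F(\delta,t) := \log\!\Big(1+\exp\!\big(t\tfrac{2\delta^2}{1+\delta^2}\big)\Big) + (N-1)\log\!\Big(1+\exp\!\big(-t\tfrac{N/(N-1)+2\delta^2}{1+\delta^2}\big)\Big).
\end{align}
The first step is then a reparametrization $y=1/(1+\delta^2)\in(0,1]$, so that $\delta=0\leftrightarrow y=1$ and $\delta=\infty\leftrightarrow y=0^+$. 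Under this change of variable, the exponents become affine in $y$: the first exponent is $2t(1-y)$ and the second is $-t(2 - y\tfrac{N-2}{N-1})$. Each term is of the form $\log(1+e^{a+by})$, which is strictly convex in $y$, so $F(\cdot,t)$ is strictly convex on $(0,1]$ with a unique minimizer $y^\star(t)$.

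Second, I would analyze the boundary behavior of $F_y$ to pin down the two threshold conditions. Direct differentiation gives
\begin{align}
F_y(y,t) = -2t\,\sigma\!\big(2t(1-y)\big) + t(N-2)\,\sigma\!\big(-t(2-y\tfrac{N-2}{N-1})\big),
\end{align}
where $\sigma$ is the sigmoid. Because $F(\cdot,t)$ is strictly convex, $\delta^\star=0$ (i.e.\ $y^\star=1$) occurs precisely when $F_y(1,t)\leq 0$, and $\delta^\star=\infty$ (i.e.\ $y^\star\to 0^+$) occurs precisely when $F_y(0^+,t)\geq 0$. Evaluating $F_y(1,t)=-t+t(N-2)/(1+e^{tN/(N-1)})$, the first condition simplifies to $e^{tN/(N-1)}\geq N-3$, i.e.\ $t\geq\frac{N-1}{N}\log(N-3)$. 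Evaluating $F_y(0,t)$ and using $(1+e^{2t})/(1+e^{-2t})=e^{2t}$, the second condition simplifies to $e^{2t}\leq(N-2)/2$, i.e.\ $t\leq\frac{1}{2}\log\frac{N-2}{2}$. For $N=3$, $\log(N-3)=-\infty$ makes the first condition vacuous, yielding $\delta^\star=0$ for every $t>0$; moreover $\frac{1}{2}\log\frac{N-2}{2}=\frac{1}{2}\log\frac{1}{2}<0$ shows the $\delta^\star=\infty$ regime is empty, consistent with the theorem.

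Third, to establish that $\delta^\star$ is strictly decreasing in $t$ on the intermediate regime (equivalently, that $y^\star(t)$ is strictly increasing), I would apply the implicit function theorem to the first-order condition $F_y(y^\star(t),t)=0$. This gives $\dot y^\star(t)=-F_{yt}/F_{yy}$, where $F_{yy}>0$ by strict convexity. Differentiating $F_y$ in $t$ and exploiting the fact that at the interior critical point the algebraic piece $-2\sigma(A)+(N-2)(1-\sigma(B))$ vanishes (where $A=2t(1-y)$, $B=t(2-y\tfrac{N-2}{N-1})$), one is left with
\begin{align}
F_{yt}\big|_{y=y^\star} = -\Big[4t(1-y^\star)\sigma(A)(1-\sigma(A)) + t(N-2)\big(2-y^\star\tfrac{N-2}{N-1}\big)\sigma(B)(1-\sigma(B))\Big] < 0,
\end{align}
since every factor is positive. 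Hence $\dot y^\star(t)>0$, so $\delta^\star(t)=\sqrt{(1-y^\star)/y^\star}$ is strictly decreasing, and the two threshold values obtained from the boundary analysis correctly mark the transitions between the simplex ETF and antipodal regimes.

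The calculations that look routine (evaluating $F_y$ at $y=0$ and $y=1$, reducing to the stated thresholds) are mechanical but need to be written out carefully; the main obstacle is the monotonicity step, where the cancellation using the first-order condition is crucial — without that simplification the sign of $F_{yt}$ is not obviously negative. I would therefore plan to state and prove the cancellation identity $2\sigma(A)=(N-2)(1-\sigma(B))$ at $y=y^\star(t)$ as a short lemma before invoking the implicit function theorem, to keep the monotonicity argument clean.
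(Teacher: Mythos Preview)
Your approach is correct and follows the same high-level plan as the paper (restrict to the \ours{} via Corollary~\ref{thm:opt_is_symmetric}, reduce to a one-parameter problem, read off the thresholds from the sign of the first derivative at the two boundaries, and invoke the implicit function theorem for monotonicity in $t$), but the execution is genuinely different. The paper works directly in $\delta$: it computes $\partial_\delta \Ls^{\op{sig}}$, factors it as a positive prefactor times an auxiliary function $g(N,t,\delta)=(4-N)+2\exp\!\big(t\tfrac{N/(N-1)+2\delta^2}{1+\delta^2}\big)-(N-2)\exp\!\big(-t\tfrac{2\delta^2}{1+\delta^2}\big)$, shows $g$ is strictly increasing in $\delta$, and obtains the two thresholds from the signs of $g(N,t,0)$ and $g(N,t,\infty)$. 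Your reparametrization $y=1/(1+\delta^2)$ makes both exponents affine in $y$, so strict convexity of $F(\cdot,t)$ is immediate and uniqueness of the minimizer comes for free --- this is cleaner than the paper's monotonicity-of-$g$ argument for that step. Conversely, the paper's monotonicity-in-$t$ step is lighter: from $g(N,t,\delta^\star)=0$ it needs only $\partial_\delta g>0$ and $\partial_t g>0$, both one-line sign checks, whereas your route requires the cancellation identity $2\sigma(A)=(N-2)(1-\sigma(B))$ at the critical point to control $F_{yt}$. Both routes are rigorous and yield the same thresholds; yours front-loads structure (convexity) at the cost of a slightly heavier monotonicity computation, while the paper does the reverse.
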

\begin{proof}
It is sufficient to use the \ours{} in Def.~\ref{def:model} to find the minimum by Corollary~\ref{thm:opt_is_symmetric}. Then, by introducing the \ours{} of Def.~\ref{def:model} to the sigmoid loss  $\Ls^{\op{sig}}$ of Eq.~\ref{loss:siglip},

\begin{align}
\Ls^{\op{sig}}(\mU (\delta), \mV (\delta)) 
&= \frac{1}{N} \sum_{i=1}^N  
\log \left(1+\exp\left(-t \left(\vu_i^\delta\right)^\top \vv_i^\delta+b\right)\right)
+ \frac{1}{N} \sum_{i=1}^N \sum_{j\in [N] \setminus \{i\}} 
\log \left(1+\exp\left(t\left(\vu_i^\delta\right)^\top \vv_j^\delta-b\right)\right)
\\
&=
\log \left(1+\exp \left(-t  \frac{1-\delta^2}{1+\delta^2} +b\right)\right)
 + (N-1) \log \left(1+\exp\left(-t\frac{\frac{1}{N-1}+\delta^2}{1+\delta^2}-b\right)\right)
\\
&=
\log \left(1+\exp \left(-t  \frac{-2\delta^2}{1+\delta^2} \right)\right)
 + (N-1) \log \left(1+\exp\left(-t\frac{\frac{N}{N-1}+2\delta^2}{1+\delta^2}\right)\right).
\end{align}

Then,
\begin{align}
\frac{\partial }{\partial \delta} \Ls^{\op{sig}}
&=
\frac{
    \exp \left(-t  \frac{-2\delta^2}{1+\delta^2} \right)
}{
    1+\exp \left(-t  \frac{-2\delta^2}{1+\delta^2} \right)
}
\left(-t  \frac{-4\delta}{(1+\delta^2)^2}\right)
+ 
(N-1) \frac{
    \exp\left(-t\frac{\frac{N}{N-1}+2\delta^2}{1+\delta^2}\right)
}{
    1+\exp\left(-t\frac{\frac{N}{N-1}+2\delta^2}{1+\delta^2}\right)
}
\left(-t  \frac{\frac{2(N-2)}{N-1}\delta}{(1+\delta^2)^2}\right)
\\
&=
\frac{2t\delta}{(1+\delta^2)^2}
\left(
    \frac{
        2
    }{
        1+\exp \left(t  \frac{-2\delta^2}{1+\delta^2} \right)
    }
    - 
    \frac{
        (N-2)
    }{
        1+\exp\left(t\frac{\frac{N}{N-1}+2\delta^2}{1+\delta^2}\right)
    }
\right)
\\
&=
\frac{2t\delta}{(1+\delta^2)^2}
\frac{
    g(N,t,\delta)
}{
    \left(1+\exp \left(t  \frac{-2\delta^2}{1+\delta^2} \right)\right)
    \left(1+\exp\left(t\frac{\frac{N}{N-1}+2\delta^2}{1+\delta^2}\right)\right)
}
\end{align}

where we define $g(N,t,\delta)=(4-N) + 2\exp\left(t\frac{\frac{N}{N-1}+2\delta^2}{1+\delta^2}\right) -(N-2) \exp \left(t  \frac{-2\delta^2}{1+\delta^2} \right)$.

Note that $g(N,t,\delta)$ is strictly monotone increasing with respect to $\delta>0$ for all $t>0$ and $N\geq3$, since 
\begin{align}
    \frac{\partial }{\partial \delta} g(N,t,\delta) 
    &=
    \frac{4t(N-2)\delta}{(1+\delta^2)^2}
    \left(
    \frac{1}{N-1}\exp\left(t\frac{\frac{N}{N-1}+2\delta^2}{1+\delta^2}\right) + \exp \left(t  \frac{-2\delta^2}{1+\delta^2} \right)
    \right)
    \\ &> 0.
\end{align}

Also, note that $\left.\frac{\partial }{\partial \delta}\Ls^{\op{sig}}\right|_{\delta=0} = 0$, $\left.\frac{\partial }{\partial \delta}\Ls^{\op{sig}}\right|_{\delta\to\infty} = 0$, and 
$\left.\frac{\partial^2 }{\partial \delta^2}  \Ls^{\op{sig}} \right|_{\delta=0} 
= 2t \frac{ (3-N) + \exp\left(t\frac{N}{N-1}\right)  }{1+\exp\left(t\frac{N}{N-1}\right)}$.

To find out the condition $t$ of $N$ that minimize $\Ls^{\op{sig}}$, we split the cases for $N=3$ and $N\geq4$.

\paragraph{[ $N=3$ case ]}

First, consider $N=3$. Then, for all $t>0$,
\begin{align}
\left.\frac{\partial^2 }{\partial \delta^2}  \Ls^{\op{sig}} \right|_{\delta=0, N=3} 
&=
2t
\frac{
    \exp\left(t\frac{3}{2}\right)
}{
    1+\exp\left(t\frac{3}{2}\right) 
}
\\ &>0
\end{align}
Furthermore, 
\begin{align}
g(3,t,\delta)
&= 1 + 2\exp\left(t\frac{\frac{3}{2}+2\delta^2}{1+\delta^2}\right) - \exp \left(t  \frac{-2\delta^2}{1+\delta^2} \right)
\\
&= 1 + 2\exp\left(t\left(2-\frac{\frac{1}{2}}{1+\delta^2}\right)\right) - \exp \left(t \left( -2+\frac{2}{1+\delta^2} \right)\right)
\\ &\geq 2\exp\left(\frac{3}{2}t\right)
\\ &> 0
\end{align}

where the inequality holds by plugging $\delta =0$, because it is monotone increasing with respect to $\delta^2$. Since $g$ is non-negative for all $t,\delta>0$, it is also hold that $\left.\frac{\partial}{\partial \delta}  \Ls^{\op{sig}} \right|_{\delta=0,N=3}>0$  for all $\delta>0$.
Therefore, $0=\argmin_{\delta}\Ls^{\op{sig}}(\mU (\delta), \mV (\delta))$ for $N=3$.

\paragraph{[ $N\geq 4$ case ]}

Second, consider the case of $N\geq4$. 

For any $t>\frac{N-1}{N}\log\left(N-3\right)$, $\left.\frac{\partial^2 }{\partial \delta^2}\Ls^{\op{sig}}\right|_{\delta=0} > 0$ holds and
\begin{align}
g(N,t,\delta)
& \geq
g(N,t,0)
\\
&= 2(3-N)+2\exp\left(t\frac{N}{N-1}\right)
\\
& > 0
\end{align}
where the first inequality comes from the fact that $g$ is strictly monotone increasing with respect to $\delta \geq 0$. Since $g$ is non-negative for above case, it is also hold that $\frac{\partial}{\partial \delta}  \Ls^{\op{sig}}>0$  for all $\delta>0$. As a result, $\Ls^{\op{sig}}$ minimize at $\delta=0$.

On the other hand, for $0<t<\frac{1}{2} \log \frac{N-2}{2}$,
\begin{align}
g(N,t,\delta)
& \leq
g(N,t,\infty)
\\
&= (4-N) +2\exp\left(2t\right) - (N-2)\exp\left(-2t\right)
\\
& < 0
\end{align}
holds. Therefore, $\frac{\partial}{\partial \delta}  \Ls^{\op{sig}} <0$  for all $\delta>0$, which implies $\Ls^{\op{sig}}$ minimize at $\delta=\infty$.

Finally, for  $\frac{1}{2} \log \frac{N-2}{2}<t<\frac{N-1}{N}\log\left(N-3\right)$, by applying the inequality opposite to the above, we can easily get $g(N,t,0)<0$ and $g(N,t,\infty)>0$. Since $g$ is strictly increasing continuous function with respect to $\delta$, there is unique $\delta^\star \in (0, \infty)$ such that $g(N,t,\delta^\star)=0$ by the intermediate value theorem. Therefore, there is unique $\delta^\star \in (0, \infty)$ such that minimize $\Ls^{\op{sig}}$.

Furthermore, since $\frac{\partial }{\partial \delta} g(N,t,\delta ) >0$ and $\frac{\partial }{\partial t} g(N,t,\delta ) >0$ holds for all $t, \delta >0$ and $N\geq 3$, by taking partial derivative of $g(N,t,\delta^\star)=0$ for fixed $N$,
\begin{align}
    0&=
    \left(\frac{\partial }{\partial t} g(N,t,\delta^\star) \right)
    +
    \left(\frac{\partial \delta^\star}{\partial t}\right)
    \left(\frac{\partial }{\partial \delta^\star} g(N,t,\delta^\star) \right)
\end{align}
implies $\frac{\partial \delta^\star}{\partial t}<0$ for all $t, \delta>0$ and $N\geq3$. Therefore, the optimal $\delta^{\star}$ is a monotonic decreasing function of $t$ for fixed $N$.
\end{proof}

\clearpage

\end{document}